\documentclass{article} % For LaTeX2e
\usepackage{geometry, amsmath}
 \geometry{
 a4paper,
 total={170mm,257mm},
 left=25mm,
 top=25mm,
 right=25mm,
 bottom = 25mm
 }

%%%%% NEW MATH DEFINITIONS %%%%%

\usepackage{amsmath,amsfonts,bm, amsthm}

% Mark sections of captions for referring to divisions of figures

% Highlight a newly defined term

% Figure reference, lower-case.

% Figure reference, capital. For start of sentence
\def\Figref#1{Figure~\ref{#1}}

% Section reference, lower-case.

% Section reference, capital.

% Reference to two sections.

% Reference to three sections.

% Reference to an equation, lower-case.
% \def\eqref#1{equation~\ref{#1}}
% Reference to an equation, upper case

% A raw reference to an equation---avoid using if possible

% Reference to a chapter, lower-case.

% Reference to an equation, upper case.

% Reference to a range of chapters

% Reference to an algorithm, lower-case.

% Reference to an algorithm, upper case.

% Reference to a part, lower case

% Reference to a part, upper case

\def\1{\bm{1}}

% Random variables

% rm is already a command, just don't name any random variables m

% Random vectors

\def\rvx{{\mathbf{x}}}
\def\rvy{{\mathbf{y}}}

% Elements of random vectors

% Random matrices

% Elements of random matrices

% Vectors

\def\vu{{\bm{u}}}
\def\vv{{\bm{v}}}
\def\vw{{\bm{w}}}
\def\vx{{\bm{x}}}
\def\vy{{\bm{y}}}
\def\vz{{\bm{z}}}

% Elements of vectors

% Matrix

% Tensor
\DeclareMathAlphabet{\mathsfit}{\encodingdefault}{\sfdefault}{m}{sl}
\SetMathAlphabet{\mathsfit}{bold}{\encodingdefault}{\sfdefault}{bx}{n}

% Graph

% Sets

% Don't use a set called E, because this would be the same as our symbol
% for expectation.

% Entries of a matrix

% entries of a tensor
% Same font as tensor, without \bm wrapper

% The true underlying data generating distribution

% The empirical distribution defined by the training set

% The model distribution

% Stochastic autoencoder distributions

 % Laplace distribution

\newcommand{\E}{\mathbb{E}}

\newcommand{\R}{\mathbb{R}}

% Wolfram Mathworld says $L^2$ is for function spaces and $\ell^2$ is for vectors
% But then they seem to use $L^2$ for vectors throughout the site, and so does
% wikipedia.

 % See usage in notation.tex. Chosen to match Daphne's book.

\usepackage{hyperref}
\usepackage{url}

% My own new commands
\usepackage{graphicx}
\usepackage{subcaption}

\newcommand{\Ji}{{J_i}}

\newtheorem{theorem}{Theorem}
\newtheorem{lemma}{Lemma}
\newtheorem{corollary}{Corollary}
\newtheorem*{theorem*}{Theorem}
\newtheorem*{lemma*}{Lemma}
\newtheorem*{corollary*}{Corollary}

\newtheorem*{remark*}{Remark}
\newtheorem*{definition*}{Definition}
\newtheorem{definition}{Definition}

\title{Trajectory growth lower bounds for random sparse deep ReLU networks}

\author{Ilan Price \& Jared Tanner \thanks{Preprint }\\
Mathematical Institute\\
University of Oxford\\
\texttt{\{ilan.price, tanner\}@maths.ox.ac.uk}
}
\date{}

\begin{document}
\sloppy

\maketitle

\begin{abstract}
This paper considers the growth in the length of one-dimensional trajectories as  they are passed through deep ReLU neural networks,  which, among other  things, is  one measure of the expressivity  of  deep networks. We generalise existing  results, providing an alternative, simpler method for lower bounding expected trajectory growth through random networks, for a more general class of weights distributions, including sparsely connected  networks. We illustrate this approach by deriving bounds for sparse-Gaussian, sparse-uniform, and sparse-discrete-valued random nets. We prove that trajectory growth  can remain exponential in depth with these new distributions, including their sparse variants, with the sparsity parameter appearing in the base of the exponent.
\end{abstract}

\section{Introduction}
Deep neural networks continue to set new benchmarks for machine learning  accuracy across a wide range of tasks, and are the basis for many algorithms we use routinely and on a daily basis. One fundamental set of theoretical questions concerning deep networks relates to their \textit{expressivity}. There remain different approaches to understanding and quantifying neural network expressivity. Some results take a classical approximation theory approach, focusing on the relationship between the architecture of the network and the classes of functions it can accurately approximate (\cite{lu2017expressive,cybenko1992approximation,hornik1989multilayer}).  Another more recent approach has been to apply persistent homology to characterise expressivity (\cite{guss2018characterizing}), while \cite{Poole} focus on global curvature, and the ability of deep networks to disentangle manifolds. Other works concentrate specifically on networks with piecewise linear activation functions, using the number of linear regions (\cite{montufar2014number}) or the volume of the boundaries between linear regions (\cite{hanin2019complexity}) in input space. In 2017, \cite{raghu2017expressive} proposed trajectory length as a measure of expressivity; in particular, they consider the expected change in length of  a  one-dimensional trajectory as it is passed through Gaussian random neural networks (see \Figref{fig: illustration} for an illustration). Their primary theoretical result was that, in expectation, the length of a one-dimensional trajectory which is passed through a fully-connected, Gaussian network is \textit{lower bounded} by a factor that is exponential with depth, but not with width. 

\begin{figure}[!h]
    \centering
\begin{subfigure}[b]{0.3\textwidth}
\includegraphics[width = \textwidth]{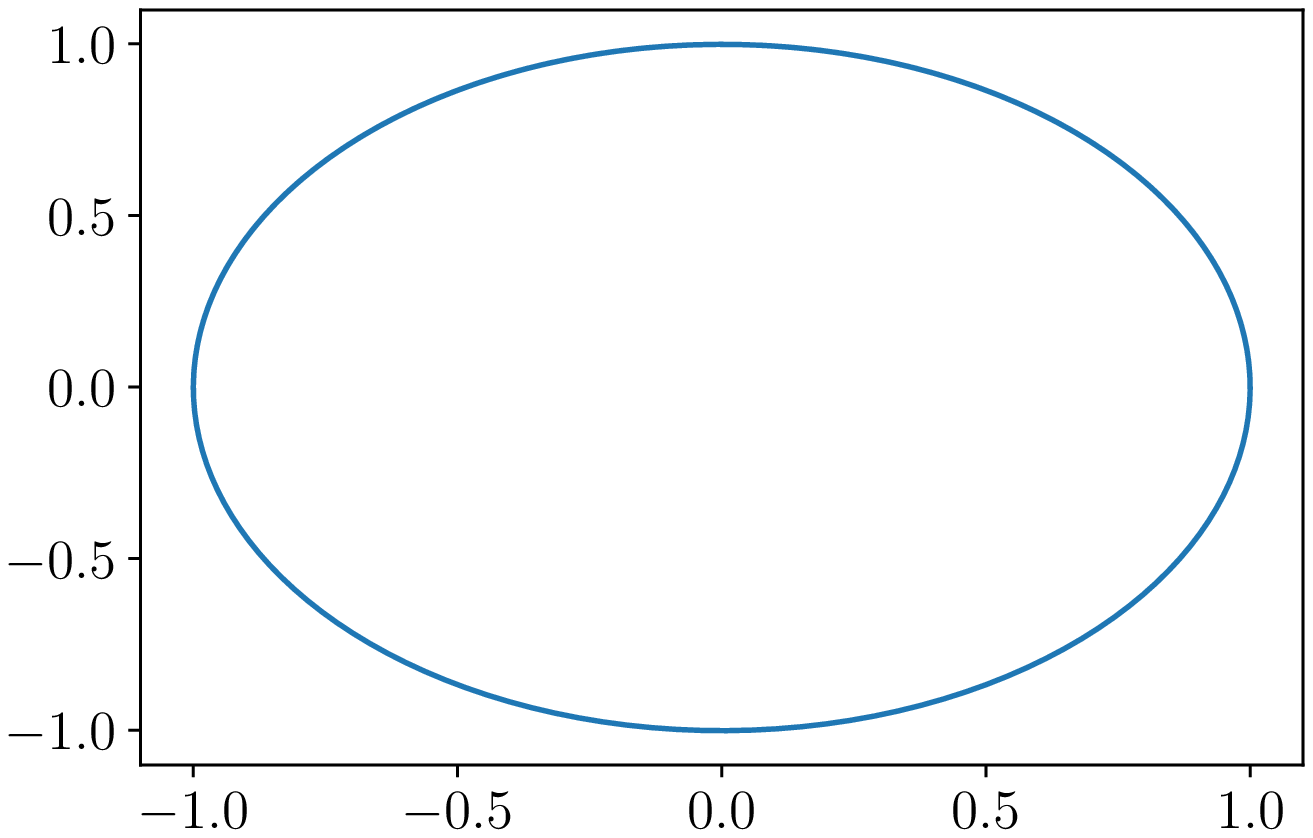}
\caption{Input}
\label{fig: input traj}
\end{subfigure}
\begin{subfigure}[b]{0.3\textwidth}
\includegraphics[width = \textwidth]{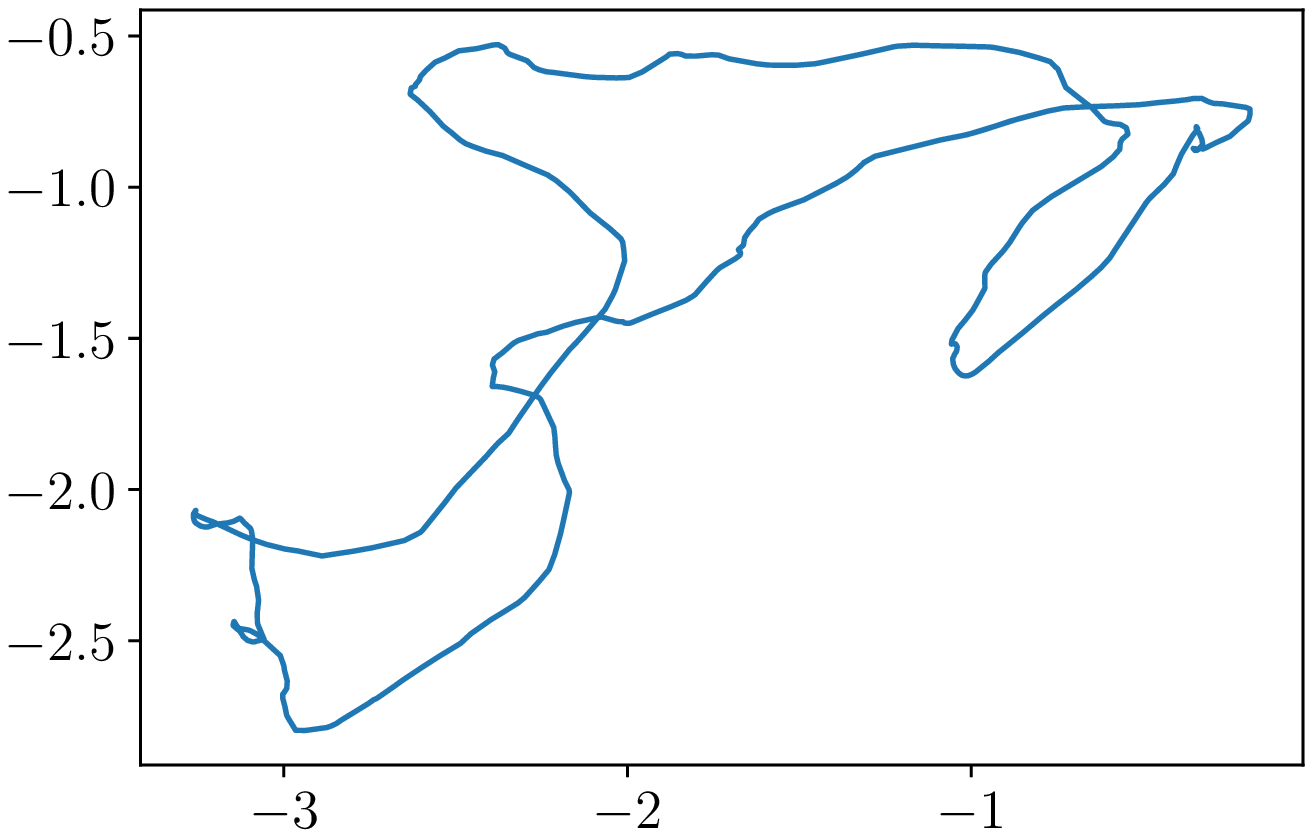}
\caption{Layer 6}
\label{fig: layer 6 traj}
\end{subfigure}
\begin{subfigure}[b]{0.3\textwidth}
\includegraphics[width = \textwidth]{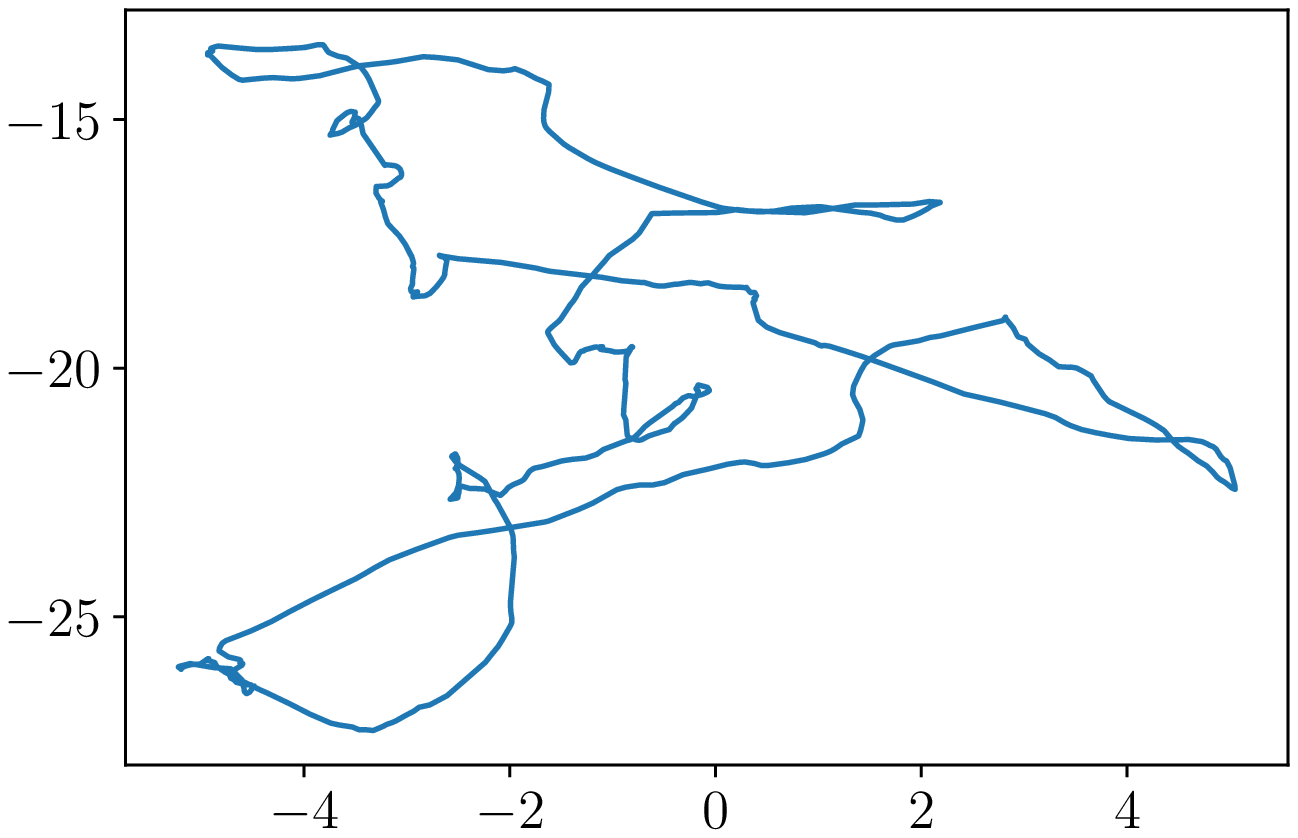}
\caption{Layer 12}
\label{fig: layer 12 traj}
\end{subfigure}
\caption{A circular trajectory, passed through a ReLU network with $\sigma_w = 2$. The plots show the pre-activation trajectory at different layers projected down onto 2 dimensions.}
\label{fig: illustration}
\end{figure}

One-dimensional trajectories and their evolution through deep networks are also of interest in their own right because they constitute simple data manifolds. Firstly, we commonly assume that the real data which we aim to correctly classify or predict with a deep network lie on one or more manifolds, and thus design a network to perform appropriately on such a manifold. Secondly, researchers are beginning to consider whether the \textit{output} (manifolds) of generator networks could be a good model for real word data manifolds, for example, as priors for a variety of inverse problems (\cite{manoel2017multi,huang2018provably}). Both of these hypotheses motivate an understanding of how manifolds are acted upon by deep networks.

Our results in this paper pertain specifically to the `trajectory length' measure of expressivity. We produce a simpler proof than in the pioneering work of \cite{raghu2017expressive}, which also generalises their results, deriving similar lower bounds for a broader class of random deep neural networks.

Theoretical work of this nature is important because it allows for more straightforward transfer and adaptation of prior theoretical results to new contexts of interest. For example, there is a current surge in research around low-memory networks, training sparse networks, and network pruning. Sparsely connected networks have shown the capacity to retain very high test accuracy (\cite{frankle2018the,han2015learning}), increased robustness (\cite{ahmad2019can,aghasi2017net}), with much smaller memory footprints, and less power consumption (\cite{yang2018energyconstrained}). The approach we take in this work enables us to extend results from dense random networks to sparse ones. It also allows us to consider the other weight distributions of sparse-Gaussian, sparse-uniform and sparse-discrete networks (see Definitions \ref{def: sparse Gaussian} - \ref{def: sparse discrete}).

More specifically we make the following contributions:

\textbf{Contributions:}
\begin{enumerate}
	\item We provide an alternative, simpler method for lower bounding expected trajectory growth through random networks, for a more general class of weights distributions (Theorem \ref{thm: general sparse}).
	\item We illustrate this approach by deriving bounds for sparse-Gaussian, sparse-uniform, and sparse-discrete random nets. We prove that trajectory growth can be exponential in depth with these distributions, with the sparsity appearing in the base of the exponential (Corollaries \ref{thm: sparse} - \ref{thm: discrete}).
	\item We observe that the expected length growth factor is strikingly similar across the aforementioned three distributions. This suggests a universality of the expected growth in length for iid centered distributions determined only by the variance and sparsity (\Figref{fig: simulations}).
\end{enumerate}
%   remarkably similar behaviour in practice across different network weight distributions with the same standard deviation and sparsity.

\subsection{Notation}\label{subsection: notation}

We consider feedforward ReLU deep neural networks. We denote a the $d$-th post-activation layer as $z^{(d)}$, and the subsequent pre-activation layer as $h^{(d)}$, such that
\begin{align*}
h^{(d)} = W^{(d)}z^{(d)} + b^{(d)}, \qquad z^{(d+1)} = \phi(h^{(d)}),
\end{align*} 
where $\phi(x):=\max(x,0)$ is applied elementwise. We denote $x=z^{(0)}$. 

We use $f_{NN}(x;\mathcal{P}, \mathcal{Q})$ to denote a random feedforward deep neural network which takes as input the vector $x$, and is parameterised by random weight matrices $W^{(d)}$ with entries sampled iid from the distribution $\mathcal{P}$, and bias vectors $b^{(d)}$ with entries drawn iid from distribution $\mathcal{Q}$.

\begin{definition}
A \textbf{random sparse network} with sparsity parameter $\alpha$, denoted $f_{NN}(x;\alpha, \mathcal{P}, \mathcal{Q})$, is a random feedforward network in which all \textit{weights} are sampled from a mixture distribution of the form
\begin{align*}
w_{ij} \sim \alpha \mathcal{P} + (1-\alpha) \delta,
\end{align*} 
where $\delta$ is the delta distribution at 0, and $\mathcal{P}$ is some other distribution. In other words, weights are 0 with probability $1-\alpha$, and sampled from $\mathcal{P}$ with probability $\alpha$. Biases are drawn iid from $\mathcal{Q}$.
\end{definition}

\begin{definition}\label{def: sparse Gaussian}
A \textbf{sparse-Gaussian network} is a random sparse network $f_{NN}(x;\alpha, \mathcal{P}, \mathcal{Q})$, where $\mathcal{P}~=~\mathcal{N}(0,\sigma_w^2)$ and $\mathcal{Q} = \mathcal{N}(0,\sigma_b^2)$.
\end{definition}

\begin{definition}\label{def: sparse uniform}
A \textbf{sparse-uniform network} is a random sparse network $f_{NN}(x;\alpha, \mathcal{P}, \mathcal{Q})$, where $\mathcal{P}~=~\mathcal{U}(-C_w,C_w)$ and $\mathcal{Q} = \mathcal{U}(-C_b,C_b)$.
\end{definition}

\begin{definition}\label{def: sparse discrete}
A \textbf{sparse-discrete network} is a random sparse network $f_{NN}(x;\alpha, \mathcal{P}, \mathcal{Q})$, where $\mathcal{P}$ is a uniform distribution over a finite, discrete, symmetric set $\mathcal{W}$, with cardinality $|\mathcal{W}| = N_w$, and $\mathcal{Q}$ is a uniform distribution over a finite, discrete, symmetric set $\mathcal{B}$, with cardinality $|\mathcal{B}|=N_b$.
\end{definition}
% We define \textbf{sparse-Gaussian networks} as networks where $\mathcal{P} = \mathcal{N}(0,\sigma_w^2)$ and $\mathcal{Q} = \mathcal{N}(0,\sigma_b^2)$.
% We define \textbf{sparse-uniform networks} as networks where $\mathcal{P} = \mathcal{U}(-C_w,C_w)$ and $\mathcal{Q} = \mathcal{U}(-C_b,C_b)$. And finally we define \textbf{sparse-discrete networks} as networks where $\mathcal{P}$ is a uniform distribution over a finite, discrete, symmetric set $\mathcal{W}$, with cardinality $N_w$, and $\mathcal{Q}$ is a uniform distribution over a finite, discrete, symmetric set $\mathcal{B}$, with cardinality $N_b$,
For a weight matrix $W$ in a random sparse network, with $w_i$ denoting the $i^{\text{th}}$ row, we define $w_{\mathcal{P}_i}$ as the vector containing only the $\mathcal{P}$-distributed entries of $w_i$.

We define a trajectory $x(t)$ in input space as a curve between two points, say $x_0$ and $x_1$, parameterized by a scalar $t \in [0, 1]$, with $x(0) = x_0$ and $x(1) = x_1$, and we define $z^{(d)} (x(t))= z^{(d)}(t)$ to be the image of the trajectory in layer $d$ of the network. The trajectory length $l(x(t))$ is given by the standard arc length,
\begin{align*}
\int_t \bigg \vert \bigg \vert \frac{dx(t)}{dt}\bigg \vert \bigg \vert dt.
\end{align*}
As in the work by \cite{raghu2017expressive}, this paper considers trajectories with $x(t+dt)$ having a non- trivial component perpendicular to $x(t)$ for all $t, dt$.

Finally, we  say a probability density or mass function $f_X(x)$ is even if $f_X(-x) = f_X(x)$ for all random vectors $x$ in the sample space.

\section{Expected Trajectory Growth Through Random Networks}\label{section: trajectory growth}

\cite{raghu2017expressive} considered ReLU and hard-tanh Gaussian networks with the standard deviation scaled by $1/\sqrt{k}$. Their result with respect to ReLU networks is captured in the following theorem.

\begin{theorem}[\cite{raghu2017expressive}] \label{thm: raghu}
Let $f_{NN}(x;\mathcal{N}(0,\sigma^2_w/k), \mathcal{N}(0,\sigma^2_b))$ be a random Gaussian deep ReLU neural network with layers of width $k$, then
 \begin{align*}
 	\E[l(z^{(d)} (t)) ] \geq \mathcal{O}\left(\frac{ \sigma_w \sqrt{k}}{\sqrt{k+1}}\right)^d \cdot l(x(t)),
 \end{align*}
 for $x(t)$ a 1-dimensional trajectory in input space.
\end{theorem}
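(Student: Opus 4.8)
The plan is to reduce the statement to a one-layer recursion and then iterate. Writing $l(z^{(d)}(t))=\int_0^1 \big\|\tfrac{d}{dt}z^{(d)}(t)\big\|\,dt$ and applying Tonelli's theorem gives $\E[l(z^{(d)}(t))]=\int_0^1 \E\big\|v^{(d)}(t)\big\|\,dt$, where I abbreviate $v^{(d)}(t):=\tfrac{d}{dt}z^{(d)}(t)$. Since $v^{(0)}(t)=\tfrac{d}{dt}x(t)$ is deterministic and $\int_0^1\|v^{(0)}(t)\|\,dt=l(x(t))$, it suffices to establish, for each $t\in[0,1]$, a bound $\E\|v^{(d)}(t)\|\ge C^d\,\|v^{(0)}(t)\|$ with $C=\mathcal{O}\big(\tfrac{\sigma_w\sqrt{k}}{\sqrt{k+1}}\big)$; I fix $t$ and suppress it below.

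First I set up the one-layer step. The chain rule gives $v^{(d+1)}_i=\mathbf{1}[h^{(d)}_i>0]\,(W^{(d)}v^{(d)})_i$, hence $\|v^{(d+1)}\|^2=\sum_{i=1}^k \mathbf{1}[h^{(d)}_i>0]\,(W^{(d)}v^{(d)})_i^2$. Conditioning on the first $d$ layers (hence on $z^{(d)}$ and $v^{(d)}$), the rows $w_i$ of $W^{(d)}$ are i.i.d.\ $\mathcal{N}(0,(\sigma_w^2/k)I)$, so the pairs $\big(h^{(d)}_i,(W^{(d)}v^{(d)})_i\big)=\big(w_i\cdot z^{(d)}+b_i,\ w_i\cdot v^{(d)}\big)$ are i.i.d.\ centered bivariate Gaussians across $i$. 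Regressing $w_i\cdot v^{(d)}$ onto $h^{(d)}_i$ so that the Gaussian residual $\eta$ is independent of $h^{(d)}_i$, and using $\E[\mathbf{1}[h>0]h^2]=\tfrac12\E[h^2]$ together with $\E[\mathbf{1}[h>0]\,h\,\eta]=0$, one obtains the clean conditional identity $\E\big[\|v^{(d+1)}\|^2 \,\big|\, z^{(d)},v^{(d)}\big]=\tfrac{\sigma_w^2}{2}\,\|v^{(d)}\|^2$, valid for every value of $z^{(d)},v^{(d)}$ and, incidentally, independent of $\sigma_b$; in particular $\E\|v^{(d)}\|^2=(\sigma_w^2/2)^d\,\|v^{(0)}\|^2$.

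The main obstacle is that the theorem asks for a lower bound on the \emph{first} moment $\E\|v^{(d+1)}\|$, whereas the identity above controls only the second moment and Jensen's inequality points the wrong way. The key structural fact is that, conditionally, $\|v^{(d+1)}\|^2=\sum_{i=1}^k X_i$ is a sum of $k$ i.i.d.\ nonnegative variables whose moments are dominated (uniformly in the angle between $z^{(d)}$ and $v^{(d)}$ and in $\sigma_b$) by those of $(\sigma_w^2/k)\|v^{(d)}\|^2$ times a $\chi^2_1$ variable, so $\sum_i X_i$ concentrates about its mean with relative fluctuation of order $k^{-1/2}$. Quantifying this — e.g.\ via the Cauchy--Schwarz bound $\E X^{1/2}\ge(\E X)^2/\E X^{3/2}$ applied to $X=\|v^{(d+1)}\|^2$, together with $\E[X^{3/2}\mid\cdot]\le(\E[X^2\mid\cdot])^{3/4}$ and $\Var(X_1\mid\cdot)\le 3\,(\sigma_w^2/k)^2\|v^{(d)}\|^4$, or alternatively via $\E\sqrt{\chi_m^2}=\sqrt{2}\,\Gamma(\tfrac{m+1}{2})/\Gamma(\tfrac{m}{2})$ with $m\sim\mathrm{Binomial}(k,\tfrac12)$ — yields $\E\big[\|v^{(d+1)}\| \,\big|\, z^{(d)},v^{(d)}\big]\ge c\,\tfrac{\sigma_w\sqrt{k}}{\sqrt{k+1}}\,\|v^{(d)}\|$ for an absolute constant $c>0$, where $\sqrt{k}/\sqrt{k+1}$ records the $1-O(1/k)$ first-to-second-moment loss and $c$ absorbs the factor $1/\sqrt{2}$ and the remaining constants into the $\mathcal{O}(\cdot)$. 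I expect this moment-comparison step, and pinning down the exact $k$-dependence of the constant, to be the only genuinely delicate part.

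Finally I assemble the pieces: taking total expectations in the one-layer bound yields $\E\|v^{(d+1)}\|\ge c\,\tfrac{\sigma_w\sqrt{k}}{\sqrt{k+1}}\,\E\|v^{(d)}\|$; iterating $d$ times down to the deterministic $v^{(0)}$ gives $\E\|v^{(d)}\|\ge\big(c\,\tfrac{\sigma_w\sqrt{k}}{\sqrt{k+1}}\big)^d\|v^{(0)}\|$; and integrating over $t\in[0,1]$ recovers $\E[l(z^{(d)}(t))]\ge\big(c\,\tfrac{\sigma_w\sqrt{k}}{\sqrt{k+1}}\big)^d\,l(x(t))$, which is the asserted bound. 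Everything else is the chain rule, rotational invariance of the Gaussian rows, and Tonelli's theorem; the standing assumption that $x(t)$ has non-trivial perpendicular increments is inherited from \cite{raghu2017expressive}.
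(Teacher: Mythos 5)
Your proposal is correct in outline, but it takes a genuinely different route from the one this paper uses to recover the stated bound. The statement you were given is the cited result of Raghu et al.; the paper does not reprove it directly but instead obtains a comparable bound as a special case of its Theorem \ref{thm: general sparse} (Corollary \ref{thm: sparse} with $\alpha=1$ and $\sigma_w\mapsto\sigma_w/\sqrt{k}$). The paper's mechanism works entirely with \emph{first} absolute moments: it replaces $I_{\mathcal A}W\,dz^{(d)}$ by the conditioned matrix $\hat W$, uses evenness of the joint weight--bias law to show $\E[|\hat w_i^\top dz^{(d)}|\mid \hat w_i^\top z^{(d)}+\hat b_i>0]=\E[|\hat w_i^\top dz^{(d)}|]$, pushes these first moments inside the square root via convexity of $x\mapsto\sqrt{x^2+C}$, and finishes with $\sqrt{|\mathcal A|}\ge|\mathcal A|/\sqrt{k}$ and $\E|\mathcal A|=k/2$. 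You instead compute the exact conditional \emph{second}-moment identity $\E[\|dz^{(d+1)}\|^2\mid z^{(d)},dz^{(d)}]=\tfrac{\sigma_w^2}{2}\|dz^{(d)}\|^2$ via the Gaussian regression decomposition, and then convert to a first-moment lower bound by moment comparison. Your "delicate step" does go through: with $X=\|dz^{(d+1)}\|^2$ a sum of $k$ conditionally i.i.d.\ nonnegative terms, each dominated by $(\sigma_w^2/k)\|dz^{(d)}\|^2$ times a $\chi^2_1$ variable, the chain $\E[X^{1/2}]\ge(\E X)^2/\E[X^{3/2}]$, $\E[X^{3/2}]\le(\E[X^2])^{3/4}$, and $\E[X^2]\le\sigma_w^4\|dz^{(d)}\|^4(\tfrac14+\tfrac3k)$ yields $\E\|dz^{(d+1)}\|\ge c\,\sigma_w\|dz^{(d)}\|$ for an absolute $c>0$ (tending to $1/\sqrt2$ as $k\to\infty$), which suffices for the $\mathcal O(\cdot)^d$ claim. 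The trade-off is clear: your argument leans on Gaussianity (joint Gaussianity of $(h_i^{(d)},w_i^\top dz^{(d)})$ and the regression residual trick) and is close in spirit to Raghu et al.'s original proof, whereas the paper's evenness-plus-Jensen argument deliberately avoids rotational invariance so that it extends to sparse, uniform, and discrete weights; conversely, your second-moment identity is exact and gives the sharper asymptotic constant $\sigma_w/\sqrt2$ rather than the paper's $\sigma_w/\sqrt{2\pi}$.

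One caveat on your alternative "route 2": $\|dz^{(d+1)}\|^2$ is \emph{not} conditionally a $\chi^2$ variable with $\mathrm{Binomial}(k,\tfrac12)$ degrees of freedom, because the indicator $\mathbf 1[h_i^{(d)}>0]$ is correlated with $w_i^\top dz^{(d)}$ through $w_i$. To make that route rigorous you must split $dz^{(d)}$ into components parallel and perpendicular to $z^{(d)}$ and discard the parallel part, which is exactly where the standing assumption of a non-trivial perpendicular increment is consumed; your Cauchy--Schwarz route avoids this entirely and is the one I would carry out in full.
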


There are, however, other network weight distributions which may be of interest.  For example, the expressivity and generative power of \textit{sparse} networks are of particular interest in the current moment, given the current interest in low-memory and low-energy networks, training sparse networks, and network pruning (\cite{frankle2018the,han2015learning, yang2018energyconstrained}). We prove that even for sparse random networks, trajectory growth can remain exponential in depth given sufficiently large initialisation scale $\sigma_w$. Scaling $\sigma_w$ by $1/\sqrt{k}$ can yield a width-independent lower bound on this growth. Moreover, a sufficiently high sparsity fraction $(1-\alpha)$ results in a lower bound which, instead of growing exponentially, shrinks exponentially to zero. This is captured by the following result.

\begin{corollary}[\textbf{Trajectory growth in deep sparse-Gaussian random networks}] \label{thm: sparse}
	Let $f_{NN}(x;\alpha, \mathcal{N}(0,\sigma^2_w), \mathcal{N}(0,\sigma^2_b))$ be a sparse-Gaussian, feedforward ReLU network as defined in Section \ref{subsection: notation}, with layers of width $k$. Then
	\begin{equation}
		\E[l(z^{(d)} (t)) ] \geq \left( \frac{\alpha \sigma_w\sqrt{k}}{\sqrt{2\pi}} \right)^d \cdot l(x(t)),
	\end{equation}
for $x(t)$ a 1-dimensional trajectory in input space.
\end{corollary}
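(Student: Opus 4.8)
The plan is to prove a single-layer multiplicative lower bound in conditional expectation and iterate it. Writing $l(z^{(d)}(t)) = \int_0^1 \|\dot z^{(d)}(t)\|\,dt$ (dots denote $d/dt$), Tonelli's theorem lets me interchange $\E$ and $\int_0^1 dt$, so it suffices to show, for each fixed $t$ and each $d$,
\[
\E\big[\,\|\dot z^{(d+1)}(t)\|\,\big|\,\mathcal F_d\,\big]\;\ge\;\frac{\alpha\sigma_w\sqrt k}{\sqrt{2\pi}}\,\|\dot z^{(d)}(t)\| ,
\]
where $\mathcal F_d$ is generated by the weights and biases of layers $0,\dots,d-1$ (so $z^{(d)}$ and $\dot z^{(d)}$ are $\mathcal F_d$-measurable and independent of $W^{(d)},b^{(d)}$). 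Given this, the tower property and induction on $d$ give $\E\|\dot z^{(d)}(t)\|\ge(\alpha\sigma_w\sqrt k/\sqrt{2\pi})^d\,\|\dot x(t)\|$ since $z^{(0)}=x$ is deterministic, and integrating over $t$ yields the stated bound. Regularity issues --- a.e.\ differentiability of $t\mapsto z^{(d)}(t)$, and $h^{(d)}_i(t)\neq 0$ for a.e.\ $t$ a.s., so that componentwise $\dot z^{(d+1)} = \mathrm{diag}\big(\1[h^{(d)}>0]\big)W^{(d)}\dot z^{(d)}$ --- are handled by the standing assumption that $x(t+dt)$ has a nontrivial component perpendicular to $x(t)$, exactly as in \cite{raghu2017expressive}.

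For the one-layer bound I would chain three elementary steps, all conditional on $\mathcal F_d$. (i) Cauchy--Schwarz gives $\|v\|_2\ge\|v\|_1/\sqrt k$ in $\R^k$, and the coordinates $(\dot z^{(d+1)})_i = \1[h^{(d)}_i>0]\,(w^{(d)}_i\cdot\dot z^{(d)})$ are i.i.d.\ (each depends only on row $i$ and bias $i$), so $\E\|\dot z^{(d+1)}\|\ge\sqrt k\,\E\big|\1[h^{(d)}_1>0]\,(w^{(d)}_1\cdot\dot z^{(d)})\big|$. (ii) Since $\mathcal P=\mathcal N(0,\sigma_w^2)$ and $\mathcal Q=\mathcal N(0,\sigma_b^2)$ are even, the law of $(w^{(d)}_1,b^{(d)}_1)$ is invariant under negation, which sends $h^{(d)}_1\mapsto-h^{(d)}_1$ while fixing $|w^{(d)}_1\cdot\dot z^{(d)}|$; hence $\E[\1[h^{(d)}_1>0]\,|w^{(d)}_1\cdot\dot z^{(d)}|]=\tfrac12\E|w^{(d)}_1\cdot\dot z^{(d)}|$ (using $\Pr(h^{(d)}_1=0)=0$). (iii) Conditioning further on the Bernoulli support mask $\xi$ of $w^{(d)}_1$, the sum $w^{(d)}_1\cdot\dot z^{(d)}$ is centred Gaussian with variance $\sigma_w^2\sum_j\xi_j(\dot z^{(d)}_j)^2$, so $\E|w^{(d)}_1\cdot\dot z^{(d)}| = \sigma_w\sqrt{2/\pi}\;\E\sqrt{\sum_j\xi_j(\dot z^{(d)}_j)^2}$. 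Multiplying the three factors gives $\E\|\dot z^{(d+1)}\|\ge\frac{\sigma_w\sqrt k}{\sqrt{2\pi}}\,\E\sqrt{\sum_j\xi_j(\dot z^{(d)}_j)^2}$.

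The crux --- and the one step where naive Jensen fails (concavity of $\sqrt{\,\cdot\,}$ would give the useless upper estimate $\sqrt\alpha\,\|\dot z^{(d)}\|$) --- is the lower bound $\E\sqrt{\sum_j\xi_j(\dot z^{(d)}_j)^2}\ge\alpha\|\dot z^{(d)}\|$. I would obtain it from the pointwise inequality $\sqrt s\ge s/\sqrt S$, valid for $0\le s\le S$, applied with $s=\sum_j\xi_j(\dot z^{(d)}_j)^2\le S=\|\dot z^{(d)}\|^2$: taking expectations over $\xi$ and using $\E\xi_j=\alpha$ gives $\E\sqrt s\ge\E s/\sqrt S=\alpha\|\dot z^{(d)}\|^2/\|\dot z^{(d)}\|=\alpha\|\dot z^{(d)}\|$. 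This is exactly where the sparsity $\alpha$ lands in the base of the exponential. Assembling steps (i)--(iii) with this estimate gives the claimed conditional one-layer bound, and the induction/integration of the first paragraph completes the proof. The same three-step template, with the Gaussian moment in step (iii) replaced by the corresponding quantity for $\mathcal P$, is what underlies the general bound of Theorem~\ref{thm: general sparse}.
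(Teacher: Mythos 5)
Your proof is correct and arrives at the right constant, but it takes a noticeably cleaner route through the same landscape than the paper does. The paper proves Corollary \ref{thm: sparse} by specialising Theorem \ref{thm: general sparse}: it conditions on the active set $\mathcal{A}$, rewrites $\|I_{\mathcal{A}}W\,dz\|$ as a conditional expectation over the rows of $\hat W$ (Eq.\ \eqref{eqn: rid of index matrix}), applies Jensen iteratively to $\sqrt{\sum_i(\hat w_i^\top dz)^2}$, invokes a chain of evenness lemmas to show $\E[|X|\mid Y>0]=\E[|X|]$, and finally extracts the factor $\sqrt{k}/2$ from $\E\sqrt{|\mathcal{A}|}\ge \E|\mathcal{A}|/\sqrt{k}$ with $|\mathcal{A}|\sim\mathrm{Bin}(k,1/2)$. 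You instead keep the ReLU indicator inside each coordinate, get the $\sqrt{k}$ from the deterministic inequality $\|v\|_2\ge\|v\|_1/\sqrt{k}$ together with the identical distribution of the coordinates, and get the $1/2$ from a one-line negation-invariance computation $\E[\1[h_1>0]\,|w_1\cdot\dot z|]=\tfrac12\E|w_1\cdot\dot z|$; this bypasses the conditioning on $|\mathcal{A}|$ and all of the Appendix lemmas on even densities under linear maps and marginalisation. Your treatment of sparsity is identical to the paper's Lemma \ref{lemma: norm of subvector} ($\sqrt{s}\ge s/\sqrt{S}$ applied to the Bernoulli mask), and your remark that Jensen points the wrong way there, giving only the $\sqrt{\alpha}$ upper estimate, is exactly the discussion in Appendix B. The main thing the paper's longer route buys is generality: by isolating the hypothesis $\E|\vu^\top\hat w_{\mathcal{P}_i}|\ge M\|\vu\|$ it covers the uniform and discrete cases in one stroke, whereas your step (iii) uses the exact Gaussian first absolute moment; as you note, replacing that single step by the appropriate moment bound recovers the general theorem, so nothing essential is lost.
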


Corollary \ref{thm: sparse} with $\alpha=1$ and $\sigma_w$ replaced by $\sigma_w / \sqrt{k}$ recovers a bound which is very similar to the prior bound by \cite{raghu2017expressive} in Theorem \ref{thm: raghu}.

Beyond Gaussian weights, we consider other distributions commonly used for initialising and analysing deep networks. Uniform distributions, for example, still constitute the default initialisations of linear network layers in both Pytorch  and Tensorflow (uniform according to $\mathcal{U}(-1/\sqrt{k}, 1/\sqrt{k})$ in the case of Pytorch, and uniform according to $\mathcal{U}(-6/\sqrt{k_{in} + k_{out}}, 6/\sqrt{k_{in} + k_{out}})$ -- a.k.a the Glorot/Xavier uniform initialization (\cite{glorot2010understanding}) -- in the case of Tensorflow). We prove an analogous lower bound for uniformly distributed weights.

\begin{corollary}[\textbf{Trajectory growth in deep sparse-uniform random networks}] \label{thm: uniform}
	Let $f_{NN}(x; \alpha, \mathcal{U}(-C_w, C_w), \mathcal{U}(-C_b, C_b))$ be a sparse-Uniform, feedforward ReLU network as defined in Section \ref{subsection: notation}, with layers of width $k$. Then
	\begin{align}
	\E[l(z^{(d)} (t)) ] \geq \left(\frac{\alpha C_w  \sqrt{k}}{4\sqrt{2}}\right)^d \cdot l(x(t)),
	\end{align}	
	for $x(t)$ a 1-dimensional trajectory in input space.
\end{corollary}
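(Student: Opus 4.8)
The plan is to obtain Corollary~\ref{thm: uniform} as an instance of Theorem~\ref{thm: general sparse}, so that the only genuinely new ingredient is a sharp Khintchine-type estimate for the uniform law. Write $\dot z^{(d)}(t)=\tfrac{d}{dt}z^{(d)}(t)$. The mechanism of the general bound runs as follows. At a.e.\ $t$ the Jacobian of the $d$-th layer map sends $\dot z^{(d)}$ to $\mathrm{diag}\bigl(\mathbf{1}[h^{(d)}>0]\bigr)W^{(d)}\dot z^{(d)}$, so coordinate $i$ of $\dot z^{(d+1)}$ is $\mathbf{1}[h_i^{(d)}>0]\,(w_i^{(d)}\!\cdot\dot z^{(d)})$. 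Applying $\|v\|_2\ge k^{-1/2}\|v\|_1$ on $\R^k$, conditioning on $z^{(d)},\dot z^{(d)}$ (which are independent of $W^{(d)},b^{(d)}$), and using that $\mathcal P$ and $\mathcal Q$ are even — so that entrywise negation of $(W^{(d)},b^{(d)})$ preserves the joint law, flips each $h_i^{(d)}$, and fixes $|w_i^{(d)}\!\cdot\dot z^{(d)}|$, giving $\E\bigl[\mathbf{1}[h_i^{(d)}>0]\,|w_i^{(d)}\!\cdot v|\bigr]=\tfrac12\E|w_i^{(d)}\!\cdot v|$ once $h_i^{(d)}$ is atomless — one gets
\[
\E\bigl[\,\|\dot z^{(d+1)}(t)\|\ \big|\ z^{(d)},\dot z^{(d)}\,\bigr]\ \ge\ \frac{\sqrt k}{2}\,\E\bigl[\,|w_1^{(d)}\!\cdot\dot z^{(d)}|\ \big|\ \dot z^{(d)}\,\bigr].
\]
Splitting the row $w_1^{(d)}$ over the random index set $S$ of its $\mathcal P$-valued entries, one applies, conditionally on $S$, the Khintchine lower bound $\E\bigl[\,|w_1^{(d)}\!\cdot v|\ \big|\ S\,\bigr]\ge\gamma_{\mathcal P}\bigl(\sum_{j\in S}v_j^2\bigr)^{1/2}$, then $\sqrt t\ge t$ on $[0,1]$ together with $\E_S\sum_{j\in S}v_j^2=\alpha\|v\|_2^2$, to conclude $\E|w_1^{(d)}\!\cdot v|\ge\alpha\gamma_{\mathcal P}$ for every unit direction $v$; unrolling over the $d$ layers and integrating in $t$ then yields $\E[l(z^{(d)}(t))]\ge\bigl(\tfrac{\alpha\gamma_{\mathcal P}\sqrt k}{2}\bigr)^d l(x(t))$. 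Here $\gamma_{\mathcal P}$ denotes the best constant with $\E\bigl|\sum_j u_j a_j\bigr|\ge\gamma_{\mathcal P}\|a\|_2$ for i.i.d.\ $u_j\sim\mathcal P$ and all $a$; for $\mathcal P=\mathcal N(0,\sigma_w^2)$ it equals $\sigma_w\sqrt{2/\pi}$, recovering Corollary~\ref{thm: sparse}.

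It therefore remains to show $\gamma_{\mathcal U(-C_w,C_w)}\ge C_w/(2\sqrt2)$, i.e.\ $\E\bigl|\sum_j u_j a_j\bigr|\ge\tfrac{C_w}{2\sqrt2}\|a\|_2$ for i.i.d.\ $u_j\sim\mathcal U(-C_w,C_w)$; this is the step I expect to be the main obstacle. I would write $u_j=\epsilon_j|u_j|$ with $\epsilon_j$ i.i.d.\ Rademacher and $|u_j|\sim\mathcal U(0,C_w)$ independent of the signs, apply the $L^1$ Khintchine inequality with its optimal constant $1/\sqrt2$ (Szarek) conditionally on the magnitudes to get $\E\bigl|\sum_j u_j a_j\bigr|\ge\tfrac1{\sqrt2}\,\E\sqrt{\sum_j|u_j|^2 a_j^2}$, and then bound $\sqrt{\sum_j|u_j|^2 a_j^2}\ge\|a\|_2^{-1}\sum_j|u_j|a_j^2$ by Cauchy--Schwarz and take expectations using $\E|u_j|=C_w/2$. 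This gives $\gamma_{\mathcal U(-C_w,C_w)}\ge C_w/(2\sqrt2)$, and substituting into the recursion above produces precisely $\bigl(\tfrac{\alpha C_w\sqrt k}{4\sqrt2}\bigr)^d l(x(t))$.

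The remaining points are routine: $\mathcal U(-C_w,C_w)$ and $\mathcal U(-C_b,C_b)$ are manifestly even; each $h_i^{(d)}$ is atomless, being a bias with a density plus independent terms, which is what makes the ReLU-mask factor exactly $\tfrac12$; and the Jacobian identity holds only for a.e.\ $t$, handled as in the general proof via the standing assumption that $x(t+dt)$ has a non-trivial component perpendicular to $x(t)$. All the real difficulty sits in the Khintchine estimate: everything upstream of it is distribution-free, and the gap between the trivial one-coordinate value $C_w/2$ and the sharp worst-case constant $C_w/(2\sqrt2)$ — the factor $1/\sqrt2$ that a careful argument must extract rather than lose — is exactly what accounts for the denominator $4\sqrt2$ in the statement, as opposed to something smaller.
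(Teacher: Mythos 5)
Your proposal is correct and, for the content specific to this corollary, essentially the same as the paper's: both reduce the problem to verifying evenness of the uniform law (immediate) and establishing $\E\bigl|\sum_j u_j a_j\bigr|\ge \tfrac{C_w}{2\sqrt 2}\|a\|_2$, and both extract the factor $1/\sqrt2$ from an optimal $L^1$ moment comparison --- the paper via the Marcinkiewicz--Zygmund inequality with Ferger's optimal $A_1=2^{-1/2}$ (Lemmas \ref{MZ_inequality}--\ref{bound for mod x uniform}), you via symmetrization and Szarek's Rademacher--Khintchine constant conditionally on the magnitudes, which for symmetric i.i.d.\ variables is the same estimate. The remaining reduction of $\E\sqrt{\sum_j|u_j|^2a_j^2}$ to $\tfrac{C_w}{2}\|a\|_2$ differs only cosmetically: the paper iterates Jensen on the convex map $x\mapsto\sqrt{x^2+C}$, you use Cauchy--Schwarz; both give the identical constant $M=C_w/(2\sqrt2)$, and substituting into Theorem \ref{thm: general sparse} yields the stated bound. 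Your re-derivation of the general recursion (the $\|v\|_2\ge k^{-1/2}\|v\|_1$ bound plus the global sign-flip symmetry giving $\E[\mathbf{1}[h_i>0]\,|w_i\cdot v|]=\tfrac12\E|w_i\cdot v|$) is a somewhat more streamlined route to the same $\alpha M\sqrt k/2$ factor than the paper's Stage 2--3 argument, which conditions on the active set $\mathcal A$, applies nested Jensen across rows, and computes $\E|\mathcal A|=k/2$ from a binomial; your shortcut replaces the paper's Lemmas \ref{lemma: continuous case - even distribution of linear transformation}--\ref{conditional unnecesary discrete} with a one-line involution argument, at no cost in the constant.
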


Another research direction which has gathered some momentum in recent years are quantized or discrete-valued deep neural networks (\cite{li2017training,hubara2016binarized,hubara2017quantized}), including recent work using integer valued weights (\cite{wu2018training}). This motivates consideration of discrete weight distributions, in addition to continuous ones. As an example of such, we prove a similar lower bound for networks with weights and biases uniformly sampled from finite, symmetric, discrete sets.

\begin{corollary}[\textbf{Trajectory growth in deep sparse-discrete random networks}] \label{thm: discrete}
	Let $f_{NN}(x; \alpha, \mathcal{P}, \mathcal{Q})$ be a sparse-discrete random feedforward ReLU network as defined in Section \ref{subsection: notation}, and layers of width $k$. Then
	\begin{align}
 \E[l(z^{(d)} (t)) ] \geq \left( \frac{\alpha \sqrt{k}}{2\sqrt{2}}\cdot \frac{ \sum_{w\in \mathcal{W}}|w|}{N_w}\right)^d \cdot l(x(t))
	\end{align}	
	for $x(t)$ a 1-dimensional trajectory in input space.
\end{corollary}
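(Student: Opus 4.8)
The plan is to obtain Corollary~\ref{thm: discrete} as a direct specialisation of the general lower bound, Theorem~\ref{thm: general sparse}: once that theorem's hypotheses are checked for a sparse-discrete network, all that remains is to evaluate the single distribution-dependent quantity entering its growth factor.

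First I would verify the hypotheses. Theorem~\ref{thm: general sparse} applies to any random sparse network $f_{NN}(x;\alpha,\mathcal P,\mathcal Q)$ whose weight law $\mathcal P$ and bias law $\mathcal Q$ are \emph{even} --- which is precisely why that notion is singled out at the end of Section~\ref{subsection: notation}. In a sparse-discrete network $\mathcal P$ is the uniform law on the finite symmetric set $\mathcal W$ and $\mathcal Q$ the uniform law on the finite symmetric set $\mathcal B$, so both mass functions are even, and hence so is the sparsified per-entry mixture $\alpha\mathcal P+(1-\alpha)\delta$ governing each $w_{ij}$. The trajectory hypothesis --- that $x(t+dt)$ always has a non-trivial component perpendicular to $x(t)$ --- is the standing assumption used for all the results in the paper, so nothing further is needed.

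Next I would read off and evaluate the growth factor. Writing $\dot z^{(d)}(t) = \tfrac{d}{dt}z^{(d)}(t)$, the proof of Theorem~\ref{thm: general sparse} produces a one-layer estimate $\E\,\|\dot z^{(d+1)}(t)\| \ge \rho\,\|\dot z^{(d)}(t)\|$ whose base $\rho$, up to the universal constant $\tfrac{1}{2\sqrt2}$ and the factor $\alpha\sqrt k$ (the sparsity level, together with the step $\|v\|_2\ge k^{-1/2}\|v\|_1$ recombining the $k$ coordinatewise contributions), is the first absolute moment of a single weight, $\E|w_{ij}| = \alpha\,\E_{w\sim\mathcal P}|w|$; the bias law enters only through its evenness, not through the constant. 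For $\mathcal P=\mathrm{Unif}(\mathcal W)$ this moment is immediate, $\E_{w\sim\mathcal P}|w| = \tfrac{1}{N_w}\sum_{w\in\mathcal W}|w|$. Substituting, iterating over the $d$ layers, and using $\E\,l(z^{(d)}(t)) = \int_0^1 \E\,\|\dot z^{(d)}(t)\|\,dt$ then gives $\E\,l(z^{(d)}(t)) \ge \bigl(\tfrac{\alpha\sqrt k}{2\sqrt2}\cdot\tfrac{1}{N_w}\sum_{w\in\mathcal W}|w|\bigr)^d\,l(x(t))$, which is the claim. (The same substitution yields Corollaries~\ref{thm: sparse} and~\ref{thm: uniform}, the Gaussian case picking up a slightly sharper constant because there the relevant perpendicular component is exactly independent of the ReLU mask.)

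The one point where the discrete case is not literally identical to the continuous ones --- and the step I would be most careful about --- is that a pre-activation $h_i^{(d)}(t) = w_i^{(d)}\cdot z^{(d)}(t) + b_i^{(d)}$ can vanish with positive probability over the weights. For discrete $\mathcal P,\mathcal Q$ the law of $h_i^{(d)}(t)$ may carry an atom at $0$, which it cannot in the sparse-Gaussian or sparse-uniform cases, and this matters because the sign-flip symmetry $(W^{(d)},b^{(d)})\mapsto(-W^{(d)},-b^{(d)})$ used to turn $\E[\mathbf{1}\{h_i^{(d)}(t)>0\}\,|\dot h_i^{(d)}(t)|]$ into $\tfrac12\,\E\,|\dot h_i^{(d)}(t)|$ only delivers $\tfrac12\,\E[\mathbf{1}\{h_i^{(d)}(t)\ne 0\}\,|\dot h_i^{(d)}(t)|]$ in general. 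I would dispose of this by noting that $h_i^{(d)}(t)=0$ forces either $w_i^{(d)}=0$ --- in which case $\dot h_i^{(d)}(t)=0$ as well, so that coordinate contributes nothing to either expression and no loss is incurred --- or an exact cancellation $\sum_j w_{ij}^{(d)} z_j^{(d)}(t) = -b_i^{(d)}$, which for a generic trajectory and a fixed weight realisation occurs only on a Lebesgue-null set of $t$ and hence does not affect $\int_0^1 \E\,\|\dot z^{(d)}(t)\|\,dt$. Since Theorem~\ref{thm: general sparse} is already stated for arbitrary even $\mathcal P,\mathcal Q$, this check is in fact subsumed by its proof, so Corollary~\ref{thm: discrete} follows purely by the substitution above.
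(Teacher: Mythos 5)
Your overall route is the paper's route---specialise Theorem~\ref{thm: general sparse} by (i) checking evenness of $\mathcal{P}$ and $\mathcal{Q}$ and (ii) supplying the constant $M$ with $\E[|\vu^\top \vw|]\geq M\|\vu\|$---and your evenness check and your handling of the possible atom of $h_i^{(d)}$ at zero are both fine (the paper deals with the latter in the preamble to Lemma~\ref{conditional unnecesary discrete}). The genuine gap is in step (ii): you never derive the bound $\E[|\vu^\top\vw|]\geq M\|\vu\|$, you only assert that the base equals ``the universal constant $\frac{1}{2\sqrt2}$'' times $\alpha\sqrt{k}$ times the first absolute moment $\E_{w\sim\mathcal{P}}|w|=\frac{1}{N_w}\sum_{w\in\mathcal{W}}|w|$. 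But the universal constant coming out of Theorem~\ref{thm: general sparse} is $\frac{\alpha M\sqrt{k}}{2}$, i.e.\ only the factor $\frac12$; the remaining $\frac{1}{\sqrt2}$ lives inside $M$ and is \emph{not} automatic. Indeed it is false in general that $\E[|\vu^\top\vw|]\geq \E|w|\cdot\|\vu\|$ for symmetric iid weights: with $\mathcal{W}=\{-1,1\}$ and $\vu=(1,1)/\sqrt2$ one gets $\E|\vu^\top\vw|=\frac{1}{\sqrt2}=\frac{1}{\sqrt2}\,\E|w|\,\|\vu\|$, so a loss of exactly $\frac{1}{\sqrt2}$ is unavoidable and must be proved to be no worse than that. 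This is the one piece of real content in the corollary beyond invoking the theorem, and it is exactly what the paper's Lemma~\ref{lemma: bound for mod x uniform discrete} supplies: apply the Marcinkiewicz--Zygmund inequality with $p=1$ and the optimal constant $A_1=2^{-1/2}$ (Lemmas~\ref{MZ_inequality} and~\ref{optimal_MZ_constants}) to get $\E|\sum_i u_i w_i|\geq \frac{1}{\sqrt2}\E[(\sum_i u_i^2w_i^2)^{1/2}]$, then Jensen to pull the expectation inside the square root coordinatewise, then compute $\E|u_iw_i|=|u_i|\frac{\sum_{w\in\mathcal{W}}|w|}{N_w}$. Your proposal should contain this chain (or some substitute Khintchine-type argument); without it the constant in the corollary is unjustified.

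Two smaller points. First, your parenthetical explanation of why the Gaussian corollary has a sharper constant (``the relevant perpendicular component is exactly independent of the ReLU mask'') is not the right reason: the Gaussian case avoids the $\frac{1}{\sqrt2}$ loss because $\vu^\top\vw$ is itself exactly Gaussian, so $\E|\vu^\top\vw|=\sqrt{2/\pi}\,\sigma_w\|\vu\|$ in closed form and no Marcinkiewicz--Zygmund step is needed. Second, your account of where the $\sqrt{k}$ comes from ($\|v\|_2\geq k^{-1/2}\|v\|_1$ over the $k$ input coordinates) is also not how the theorem produces it; in the proof the $\sqrt{k}/2$ arises from $\E[\sqrt{|\mathcal{A}|}]\geq k^{-1/2}\E[|\mathcal{A}|]=\sqrt{k}/2$ over the \emph{active output} coordinates. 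Neither of these affects the validity of the corollary once $M$ is correctly established, but as written they suggest a bookkeeping of the constants that would not survive being made precise.
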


In all cases these lower bounds show how to choose the combination of $\sigma_w$ and $\alpha$ to guarantee (or not) exponential growth in trajectory length in expectation at initialisation. 

The main idea behind the derivation of these results is to consider how the length of a small piece of a trajectory (some $\|dz^{(d)}\|$) grows from one layer to the next ($\|dz^{(d+1)}\| = \|\phi(h^{d}(t+dt)) - \phi(h^{(d)}(t)\|$). In the context of random feedforward networks, we can consider piecewise linear activation functions as restrictions of $dh^{(d)}$ to a particular support set which is statistically dependent on $h^{(d)}$.  This approach was developed by \cite{raghu2017expressive}. The key to our proof is providing a more direct and more generally applicable way of accounting for this dependence than originally provided by \cite{raghu2017expressive}. Specifically, our approach lets us derive the following, more general result, from which Corollaries \ref{thm: sparse}, \ref{thm: uniform}, and \ref{thm: discrete} follow easily.

\begin{theorem}[\textbf{Trajectory growth in deep random sparse networks}] \label{thm: general sparse}
	Let $f_{NN}(x; \alpha, \mathcal{P}, \mathcal{Q})$ be a random sparse network as defined in Section \ref{subsection: notation}, with layers of width $k$. Let $\mathcal{P}$ and $\mathcal{Q}$ be such that the joint distribution over a vector of independent elements from both distributions is even. If $\E [|\vu^\top \hat{w}_{\mathcal{P}_i}|] \geq M\|\vu\|$ for any constant vector $\vu$, for all $i$, then
	\begin{align}\label{eqn:  general sparse}
	\E[l(z^{(d)} (t)) ] \geq \left(\frac{\alpha M \sqrt{k}}{2}\right)^d \cdot l(x(t))
	\end{align}	
	 for $x(t)$ a 1-dimensional trajectory in input space.
\end{theorem}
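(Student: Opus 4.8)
The plan is to establish a single multiplicative step
$\mathbb{E}[\|v^{(d+1)}(t)\|] \ge \tfrac{\alpha M \sqrt k}{2}\,\mathbb{E}[\|v^{(d)}(t)\|]$, where $v^{(d)}(t) := \tfrac{d}{dt}z^{(d)}(t)$ is the tangent vector to the image trajectory in layer $d$, and then iterate it over depth and integrate over $t\in[0,1]$, using $l(z^{(d)}(t)) = \int_0^1 \|v^{(d)}(t)\|\,dt$ and $v^{(0)}(t) = \tfrac{d}{dt}x(t)$. Everything is carried out conditionally on $\mathcal F_d$, the $\sigma$-algebra generated by $W^{(0)},\dots,W^{(d-1)}$ and $b^{(0)},\dots,b^{(d-1)}$; then $z^{(d)}(t)$ and $v^{(d)}(t)$ are $\mathcal F_d$-measurable, hence independent of the fresh parameters $W^{(d)},b^{(d)}$ used in the step. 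Writing $h=h^{(d)}(t)$ and $v=v^{(d)}(t)$, the chain rule gives $v^{(d+1)}_i = \phi'(h_i)\,(w^{(d)}_i\!\cdot v)$.

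The first move is to pass from the Euclidean norm to single coordinates. Since $\|x\|_2 \ge k^{-1/2}\|x\|_1$ on $\mathbb R^k$, and the pairs $(w^{(d)}_i,b^{(d)}_i)$ are i.i.d., the terms $|v^{(d+1)}_i| = \phi'(h_i)\,|w^{(d)}_i\!\cdot v|$ are i.i.d.\ given $\mathcal F_d$, so
\[
\mathbb{E}\big[\|v^{(d+1)}\|\,\big|\,\mathcal F_d\big] \;\ge\; \frac{1}{\sqrt k}\sum_{i=1}^{k}\mathbb{E}\big[\phi'(h_i)\,|w^{(d)}_i\!\cdot v|\,\big|\,\mathcal F_d\big] \;=\; \sqrt k\;\mathbb{E}\big[\phi'(h_1)\,|w^{(d)}_1\!\cdot v|\,\big|\,\mathcal F_d\big].
\]

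The key step — the one replacing the support-set accounting of \cite{raghu2017expressive} — decouples the ReLU gate from the length increment via the evenness hypothesis. The map $(w^{(d)}_1,b^{(d)}_1)\mapsto(-w^{(d)}_1,-b^{(d)}_1)$ is measure-preserving, sends $h_1\mapsto-h_1$, and fixes $|w^{(d)}_1\!\cdot v|$; adopting the convention $\phi'(0)=\tfrac12$ so that $\phi'(h)+\phi'(-h)\equiv1$, averaging the quantity over this reflection gives $\mathbb{E}[\phi'(h_1)|w^{(d)}_1\!\cdot v|\mid\mathcal F_d] = \tfrac12\,\mathbb{E}[|w^{(d)}_1\!\cdot v|\mid\mathcal F_d]$. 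The only subtlety here is that the classical $\phi'$ can disagree with this convention on $\{h_1=0\}$: this event is null for continuous $\mathcal P,\mathcal Q$, and in general is absorbed by noting that $\{t:h^{(d)}_i(t)=0\}$ is Lebesgue-null for a.e.\ weight realisation — which is where the standing assumption that $x(t)$ has a nontrivial perpendicular component is used — so it does not affect $\int_0^1\|v^{(d+1)}(t)\|\,dt$.

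It remains to lower bound $\mathbb{E}[|w^{(d)}_1\!\cdot v|\mid\mathcal F_d]$, and here the sparsity enters. Condition further on the random support $S\subseteq\{1,\dots,k\}$ of the nonzero entries of $w^{(d)}_1$: given $S$, the nonzero part of the row is a vector $\hat{w}_{\mathcal{P}_1}$ of $|S|$ i.i.d.\ $\mathcal P$-entries and $w^{(d)}_1\!\cdot v = \hat{w}_{\mathcal{P}_1}^{\top} v_S$, with $v_S$ the (constant, given the conditioning) restriction of $v$ to $S$. The hypothesis applied with $\vu=v_S$ yields $\mathbb{E}[|w^{(d)}_1\!\cdot v|\mid\mathcal F_d,S]\ge M\|v_S\|$, and averaging over $S$ — equivalently over i.i.d.\ Bernoulli$(\alpha)$ masks $\xi_j$ on the coordinates — gives $\mathbb{E}[|w^{(d)}_1\!\cdot v|\mid\mathcal F_d]\ge M\,\mathbb{E}\big[\sqrt{\textstyle\sum_j\xi_j v_j^2}\big]$. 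Since $0\le\sum_j\xi_j v_j^2\le\|v\|^2$ one has $\sqrt X\ge X/\|v\|$, hence $\mathbb{E}[\sqrt{\sum_j\xi_j v_j^2}]\ge\mathbb{E}[\sum_j\xi_j v_j^2]/\|v\|=\alpha\|v\|$. Chaining the three displays gives $\mathbb{E}[\|v^{(d+1)}(t)\|\mid\mathcal F_d]\ge\tfrac{\alpha M\sqrt k}{2}\|v^{(d)}(t)\|$; taking full expectations, inducting on $d$ (base case $d=0$ deterministic), and integrating in $t$ finishes the proof. I expect the main obstacle to be the conditioning/Fubini bookkeeping around the reflection step — in particular making the $\phi'(0)=\tfrac12$ convention rigorous for distributions with atoms — rather than any individual estimate; note that the $\|\cdot\|_2$-versus-$\|\cdot\|_1$ bound is what produces the $\sqrt k$ and $\sqrt X\ge X/\|v\|$ is what produces the linear (rather than $\sqrt\alpha$) dependence on $\alpha$, so the directions of those two inequalities are the crucial points to get right.
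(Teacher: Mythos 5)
Your proof is correct and rests on the same three pillars as the paper's own argument: the sign-flip symmetry of the even joint law of a row and its bias is used to neutralise the dependence introduced by the ReLU gate (this is exactly the content of Lemmas \ref{lemma: conditional unnecesary} and \ref{conditional unnecesary discrete}, which state $\E[|X|\mid Y>0]=\E[|X|]$); the hypothesis $\E[|\vu^\top\hat{w}_{\mathcal{P}_i}|]\geq M\|\vu\|$ is applied to the restriction of the tangent vector to the random support; and the Bernoulli average $\E[\|v_S\|]\geq\alpha\|v\|$ is obtained by the same linearisation $\sqrt{X}\geq X/\max X$ as Lemma \ref{lemma: norm of subvector}. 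The one place you genuinely diverge is in how the factor $\sqrt{k}/2$ arises: the paper restricts to the active set $\mathcal{A}$, applies Jensen's inequality inside the square root, and then uses $\E[\sqrt{|\mathcal{A}|}]\geq\E[|\mathcal{A}|]/\sqrt{k}=\sqrt{k}/2$ with $|\mathcal{A}|$ binomial of parameter $1/2$; you instead bound $\|\cdot\|_2\geq k^{-1/2}\|\cdot\|_1$, sum over all $k$ coordinates by linearity, and extract the $1/2$ directly from the per-coordinate identity $\E[\phi'(h_i)\,|w_i^\top v|]=\tfrac12\E[|w_i^\top v|]$. Your version never introduces the active set or the nested conditional expectations of the paper's Stage 2 (in particular it avoids the somewhat delicate equivalence asserted in the paper's equation \eqref{eqn: rid of index matrix}), which is a mild but real simplification; the constants come out identical. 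The only caveat is the treatment of $\{h_i=0\}$ when $\mathcal{P},\mathcal{Q}$ have atoms: your remark that this is absorbed on a Lebesgue-null set of $t$ is at essentially the same level of rigour as the paper's own discussion preceding Lemma \ref{conditional unnecesary discrete}, which argues that $P(h_i=0)=0$ because $z^{(d)}$ is fixed independently of the fresh weights.
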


\begin{remark*}
It is trivial to amend this result for networks where the width, distribution, and sparsity varies layer by layer, in which case the lower bound \eqref{eqn:  general sparse} is replaced by 
\begin{align*}
    \prod_{j=i}^d \left( \frac{\alpha_j M_j \sqrt{k_j}}{2}\right) \cdot l(x(t))
\end{align*}
Moreover, the bounds from Theorem \ref{thm: general sparse} and Corollaries \ref{thm: sparse} - \ref{thm: discrete} hold true in the 0 bias case as well.
\end{remark*}

\section{Proof of Theorem \ref{thm: general sparse}}

We prove Theorem \ref{thm: general sparse} in three stages: i)  We turn the problem into one of bounding from below the change in the length of an infinitesimal  line segment; ii) we account simply and explicitly for the dependence generated by the ReLU activation;  and iii)  we break this dependence by taking advantage of the symmetry characterising this class of distributions.   Supporting lemmas can be found in Appendix A.

\begin{proof}

\textbf{Stage 1:} 

For the first stage of proof, we will closely follow \cite{raghu2017expressive}. We are interested in deriving a lower bound of the form, 
\begin{align}
\E\left[\int_t \bigg \vert \bigg \vert \frac{dz^{(d)}(t)}{dt}\bigg \vert \bigg \vert dt \right] \geq C\cdot \int_t \bigg \vert \bigg \vert \frac{dx(t)}{dt}\bigg \vert \bigg \vert dt ,
\end{align}
for some constant $C$. As noted by \cite{raghu2017expressive}, it suffices to instead derive a bound of the form 
\begin{align*}
\E\left[\|dz^{(d)}(t)\|  \right] \geq C\|dx(t)\|,
\end{align*}
since integrating over $t$ yields the desired form. Our approach will be to derive a recurrence relation between $\|dz^{(d+1)}\|$ and $\|dz^{(d)}\|$, where we refrain from explicitly including the dependence of $dz$ on $t$, for notational clarity. 

Next, like \cite{raghu2017expressive}, our proof relies on the observation that
\begin{align*}
dz^{(d+1)} &= \phi( W^{(d)}z^{(d)}(t+\delta t) + b^{(d)}) - \phi(W^{(d)}z^{(d)}(t) + b^{(d)}) \\ 
& = \phi^{(d)}(t+\delta t) - \phi^{(d)}(t)  \\
& = d\phi^{(d)},    
\end{align*} 
and that since $\phi$ is the ReLU operator, $\frac{d\phi}{dh^{(d)}_j}$ is either 0 or 1. When $z^{(d)}$ is fixed  independently of $W^{(d)}$ and $b^{(d)}$, then $P(h^{(d)}_j = 0) =0$ (see the  preamble to Lemma \ref{conditional unnecesary discrete} for more detail on this), and thus we need only note that $d\phi^{(d)}_j~=~dh^{(d)}_j$ when $h^{(d)}_j>0$, and $d\phi^{(d)}_j=0$ when $h^{(d)}_j<0$. We define $\mathcal{A}^{(d)}$  to be the  set of `active nodes' in layer $d$; specifically,
\begin{align*}
\mathcal{A}^{(d)}:= \{ j: h_j^{(d)}>0\},     
\end{align*}
 and $I_{\mathcal{A}^{(d)}}\in \R^{k \times k}$ is defined as the matrix with ones on the diagonal entries indexed by set $\mathcal{A}^{(d)}$, and 0 everywhere else. We can then write 
\begin{align*}
\| dz^{(d+1)}\| & = \|I_{\mathcal{A}^{(d)}} (h^{(d)}(t+dt) - h^{(d)}(t) ) \| \\
 & = \|I_{\mathcal{A}^{(d)}} W^{(d)} dz^{(d)} \|.
\end{align*}
From here we will drop the weight index $(d)$ to minimise clutter in the exposition.

It is at this point where we depart from the proof strategy used by \cite{raghu2017expressive}. The next steps in their proof depend heavily on the weight matrices in the network being Gaussian. For example, they require that a weight matrix after rotation has the same, i.i.d. distribution as the matrix before rotation. Instead, our proof can tackle a number of other, non-rotationally-invariant distributions, as well as sparse networks.

\textbf{Stage 2:} 

The next stage of the proof begins by noting that after conditioning on size of the set $\mathcal{A}$, 
\begin{align}\label{eqn: rid of index matrix}
\E [ \|I_\mathcal{A} W dz^{(d)}\| \ | \ |\mathcal{A}|] = \E [ \| \hat{W}dz^{(d)} \| \ | \ \hat{w}_i^\top z^{(d)} + \hat{b}_i > 0 \ \forall i, |\mathcal{A}| ],
\end{align}
where $\hat{W} \in \R^{|\mathcal{A}| \times k}$ is the matrix comprised of the rows of $W$ indexed by $\mathcal{A}$, and we denote the $i$-th row of $\hat{W}$ as $\hat{w}_i$, and the $i$-th entry of $\hat{b}$ as $\hat{b}_i$. Equation \ref{eqn: rid of index matrix} follows since the elements of $Wdz^{(d)}$ are i.i.d., and $\mathcal{A}^{(d)}$ selects all entries whose corresponding entries in $h^{(d)}$ have positive values. Thus, in expectation, pre-multiplying by the matrix $I_{\mathcal{A}^{(d)}}$ is equivalent to considering  $\hat{W}dz^{(d)}$ instead of $I_\mathcal{A} Wdz^{(d)}$ together with conditioning on the fact that every element in the vector $\hat{W}z^{(d)} + \hat{b}$  is positive.

This gives us
\begin{align}
\E [ \|I_\mathcal{A} W dz^{(d)}\| \ ] & =  \E \left[ \mathop{{}\E}_{\hat{w}_1} \mathop{{}\E}_{\hat{w}_2} \cdots \mathop{{}\E}_{\hat{w}_{|\mathcal{A}|}} \left[ \sqrt{\sum_{i=1}^{|\mathcal{A}|} ( \hat{w}_i^\top  dz^{(d)})^2} \ \bigg \vert \ \hat{w}_i^\top z^{(d)} + \hat{b}_i > 0 \ \forall i, |\mathcal{A}|  \right] \right] \label{split over rows}\\ 
& = \E \left[ \mathop{{}\E}_{\hat{w}_1} \mathop{{}\E}_{\hat{w}_2} \cdots \mathop{{}\E}_{\hat{w}_{|\mathcal{A}|}} \left[ \sqrt{\sum_{i=1}^{|\mathcal{A}|} | \hat{w}_i^\top  dz^{(d)} |^2} \ \bigg \vert \ \hat{w}_i^\top z^{(d)} + \hat{b}_i > 0 \ \forall i, |\mathcal{A}|  \right] \right] \label{trivial} \\
&\geq \E \left[ \sqrt{\sum_{i=1}^{|\mathcal{A}|} \mathop{{}\E}_{\hat{w}_i} [ | \hat{w}_i^\top  dz^{(d)} | \ | \hat{w}_i^\top z^{(d)} + \hat{b}_i > 0]^2 } \right] \label{jensen},
\end{align}
where \eqref{split over rows} follows from the analysis above and the independence of each $\hat{w}_i$, \eqref{trivial} is trivial, and \eqref{jensen} follows from iteratively applying Jensen's inequality, after noting that $f(x)=\sqrt{x^2+C}$ is convex for $x, C\geq0$.

Now let $\Ji$ denote the (random) index set of the $\mathcal{P}$-distributed entries of $\hat{w}_i$,  and let $w_\Ji, dz^{(d)}_{J_i}, z^{(d)}_{J_i}$ denote the restrictions to the indices in $\Ji$ of $\hat{w}_i$, $dz^{(d)}$ and $z^{(d)}$  respectively.  Then $\hat{w}_i^\top z^{(d)} = w_\Ji^\top z^{(d)}_\Ji$, and $  \hat{w}_i^\top  dz^{(d)} = w_\Ji ^\top dz^{(d)}_\Ji$, such that, after conditioning on $\Ji$, we have that 
\begin{align}
\E [ \| \hat{W}p \| \ | \ \hat{w}_i^\top z^{(d)} + \hat{b}_i > 0 \ \forall i, |\mathcal{A}| ] \geq \underbrace{\E \left[ \sqrt{\sum_{i=1}^{|\mathcal{A}|} \overbrace{ \mathop{{}\E}_{\Ji}\big[ \underbrace{\mathop{{}\E}_{w_\Ji} [ | w_\Ji^\top  dz^{(d)}_\Ji | \ | w_\Ji^\top z^{(d)}_\Ji  + \hat{b}_i > 0, \Ji] }_{(*)}\big] }^{(**)}}\ ^2 \right]}_{(***)}. \label{telescope of expectations}
\end{align}

\textbf{Stage 3:} 

The third stage of the proof is to work our way from the inside out, lower bounding $(*)$ first, then $(**)$, and finally $(***)$. 

Consider the expectation in $(*)$. Having conditioned on $\Ji$, we can define  $X =  w_\Ji ^\top dz^{(d)}_\Ji$ and  $Y = w_\Ji^\top z^{(d)}_\Ji  + \hat{b}_i $, such that lower bounding $(*)$ means lower bounding
\begin{align}
\E [ |X| \ | Y>0].
\end{align} 
By assumption the joint distribution over $G = [w_{\Ji, 1}, \dots, w_{\Ji, k}, \hat{b}_i]^\top$ is even. The vector $H = [X,Y, w_{\Ji, 3} \dots, w_{\Ji,k}, \hat{b}_i]^\top$ is obtained by a linear transformation of $G$ (which is invertible since $\|z^{(d)}\|$ is not parallel to $\|dz^{(d)}\|$). Thus by Lemma \ref{lemma:  continuous case - even distribution of linear transformation} (continuous) or Lemma \ref{lemma:  discrete case - even distribution of linear transformation} (discrete) this joint distribution over $H$ is also even, and by Lemma \ref{lemma: continuous case, even after marginalisation} (continuous) or Lemma \ref{lemma: discrete case, even after marginalisation} (discrete), the joint distribution of $[X,Y]^\top$ is even too. We can therefore apply Lemma \ref{lemma: conditional unnecesary} (continuous) or Lemma \ref{conditional unnecesary discrete} (discrete) and need only consider $\E[|X|]$, which is bounded as
\begin{align}
	\E[|X|] \geq M\|dz^{(d)}_\Ji\|,
\end{align}
again by assumption. 

Having bounded $(*)$, we average over $\Ji$ to get $(**)$,  for which we can apply Lemma \ref{lemma: norm of subvector} to get 
\begin{align}
\mathop{{}\E}_{\Ji} [M\|dz^{(d)}_\Ji\|] \geq \alpha M \|dz^{(d)}\|.\label{eq: lowerbound  subvector in proof}
\end{align}
Finally, we can bound $(***)$ as follows
\begin{align}
\E [\|I_\mathcal{A} W dz^{(d)}\| ] & \geq \mathop{{}\E}_{|\mathcal{A}|} \left[ \sqrt{\sum_{i=1}^{|\mathcal{A}|} \alpha^2 M^2 \|dz^{(d)}\|^2} \right] \label{substitute in}\\
& = \mathop{{}\E}_{|\mathcal{A}|} \left[\sqrt{|\mathcal{A}| \cdot \alpha^2 M^2 \|dz^{(d)}\|^2} \right]\label{independent of i}\\
& \geq \mathop{{}\E}_{|\mathcal{A}|} \left[\frac{1}{\sqrt{k} \alpha M  \|dz^{(d)}\|} \cdot |\mathcal{A}| \cdot \alpha^2 M^2 \|dz^{(d)}\|^2 \right]\label{linear bound pf sqrt}\\
& =  \frac{\alpha M \|dz^{(d)}\|}{\sqrt{k}} \cdot \E [ |\mathcal{A}| ].
\end{align}
where \eqref{substitute in} is obtained by substituting the bound for $(**)$ into the inequality in \eqref{telescope of expectations}, \eqref{independent of i} follows since there is no dependence on $i$ in the summed terms, and \eqref{linear bound pf sqrt} follows since for any $0 \leq \gamma \leq \text{max}(\gamma)$, $\sqrt{\gamma} \geq \frac{1}{\sqrt{max(\gamma)}} \gamma $, and $|\mathcal{A}|$ is at most $k$. 

The proof is concluded by calculating  $\E[ |\mathcal{A}| ]$. Since $|\mathcal{A}|$ is the number of entries in the vector $h^{(d)}$ which are positive, and each entry in that vector is an independent, centred random variable, $|\mathcal{A}|$ has a binomial distribution with probability $1/2$, and therefore an expected value of $k/2$. Plugging this in yields the final recursive relation between $\|dz^{(d+1)}\|$ and $\|dz^{(d)}\|$,
\begin{align*}
\E [\|dz^{(d+1)} \| ] \geq \frac{\alpha M \sqrt{k}}{2} \|dz^{(d)}\| .
\end{align*}
Iterative application of this result starting at the first layer yields the final result. 

\end{proof}

Let us illustrate the ease with which Corollaries \ref{thm: sparse}, \ref{thm: uniform} and \ref{thm: discrete} are obtained. In the case of each distribution, we need to do two things. First, we must verify that the necessary assumption holds in the case of those distributions $\mathcal{P}$ and $\mathcal{Q}$: that the joint distribution over a vector of independent elements from both distributions is even. Second, we must derive a bound of the form $\E[|\vu^\top \vw | ] \geq M \|\vu\|$, where $w_i \sim \mathcal{P}$, and substitute $M$ into Theorem \ref{thm: general sparse}.

When $\mathcal{P}$ and $\mathcal{Q}$ are centred Gaussians, the joint distribution over elements from one or both distributions is a multivariate Gaussian, with an even joint probability density function.  Moreover,  for $U=\vu^\top \vw $, $\E[|U|]$ has a closed form solution, 
\begin{align*}
\E[|U|] = \frac{\sqrt{2} \sigma_w}{\sqrt{\pi}}  \|\vu\|
\end{align*}

When $\mathcal{P}$ and $\mathcal{Q}$ are centred uniform distributions, the joint distribution is uniform over the polygon bounded in each dimension by the symmetric bounds $[-C_w,C_w]$ or $[-C_b,C_b]$, and thus is even.  Next, to bound $\E[|U|]$, we apply the Marcinkiewicz-Zygmund inequality with $p=1$, using the optimal $A_1$ from Lemmas \ref{MZ_inequality} and \ref{optimal_MZ_constants}, to get that 
\begin{align*}
\E[|U|] & \geq \frac{C_w}{2\sqrt{2}} \|\vu \|;
\end{align*}
for details of this derivation, see Lemma \ref{bound for mod x uniform}. 

Likewise, when $\mathcal{P}$ and $\mathcal{Q}$ are uniform distributions over discrete, symmetric, finite sets $\mathcal{W}$ and $\mathcal{B}$ respectively, we make a discrete analogue of the argument made in the continuous uniform case to confirm the necessary assumption holds. Bounding $\E[|U|]$ in this case also follows from a very similar argument to that made in the continuous case, detailed in full in Lemma \ref{lemma: bound for mod x uniform discrete}, yielding
\begin{align*}
\E[|U|] \geq \frac{\sum_{w\in \mathcal{W}}|w|}{\sqrt{2} N_w} \|\vu\|.
\end{align*}

\section{Numerical Simulations}

In this section we demonstrate, through numerical simulations, how the relationships between the the network's distributional and architectural properties observed in practice compare with those described in the lower bounds of Corollaries \ref{thm: sparse} - \ref{thm: discrete}. To this end, we use as our trajectory a straight line between two (normalised) MNIST datapoints\footnote{In this experiment we chose the $101^{\text{st}}$ and $1001^{\text{st}}$ points from the MNIST test set, but the choice of points does not qualitatively change the results.}, discretized into 10000 pieces. For each combination of distribution and parameters, we pass the aforementioned line through 100 different deep neural networks of width $784$, and average the results. Specifically, we consider three different networks types, sparse-Gaussian, sparse-uniform, and sparse-discrete networks, from Definitions \ref{def: sparse Gaussian} - \ref{def: sparse discrete} respectively. For each distribution we consider different values of network fractional density $\alpha$ ranging from $0.1$ to $1$. In the sparse-Gaussian networks, non-zero weights are sampled from $\mathcal{N}(0,\sigma^2_w / k)$, and biases from $\mathcal{N}(0, 0.01^2)$. In the sparse-Uniform networks, non-zero weights are sampled from $\mathcal{U}(-C/\sqrt{k},C/\sqrt{k})$, and biases from $\mathcal{U}(-0.01,0.01)$. In the sparse-discrete networks, non-zero weights are uniformly sampled from $\mathcal{W}:= (1/\sqrt{k}) \odot \{-C, -(C+1), \dots, C-1, C \}$, and biases from $\mathcal{B}:= \{-0.01,0.01\}$. We do this for a variety of $\sigma_w$ and $C$ values. The results are shown in Figures \ref{fig: d_dep} and \ref{fig: simulations}. 

\begin{figure}[!h]
\centering
\includegraphics[width = 0.4\textwidth]{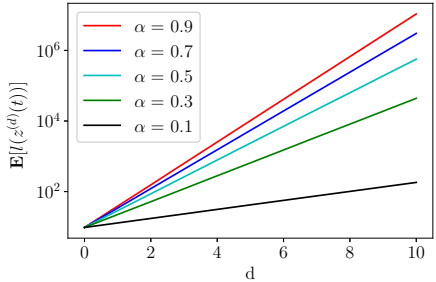}
\caption{Expected length of a line connecting two MNIST data points as it passes through a sparse-Gaussian deep network, plotted at each layer $d$.}
\label{fig: d_dep}
\end{figure}

Figure \ref{fig: d_dep} plots the average length of the trajectory at layer $d$ of a sparse-Gaussian network, with $\sigma_w  = 6$ and for different choices of sparsity ranging from $0.1$ to $0.9$. We see exponential increase of expected length with depth even in sparse networks, with smaller slopes for smaller $\alpha$ (higher sparsity). 
\begin{figure}[!h]
    \centering
% \begin{subfigure}[b]{0.32\textwidth}
% \includegraphics[width = \textwidth]{d_dependence}
% \caption{}
% \label{fig: d_dep}
% \end{subfigure}
\begin{subfigure}[b]{0.45\textwidth}
\includegraphics[width = \textwidth]{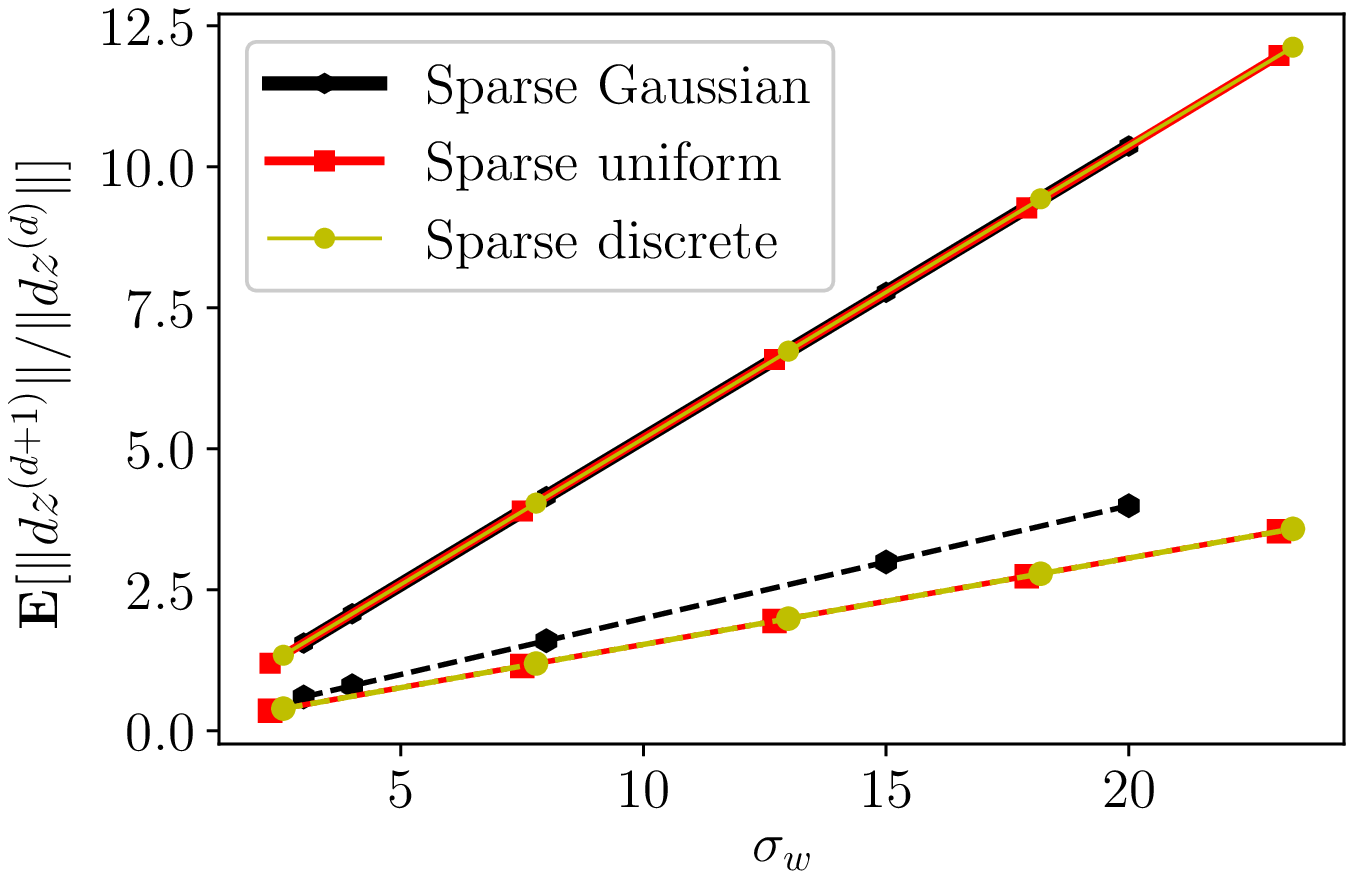}
\caption{}
\label{fig: sig_dep}
\end{subfigure}
\begin{subfigure}[b]{0.45\textwidth}
\includegraphics[width = \textwidth]{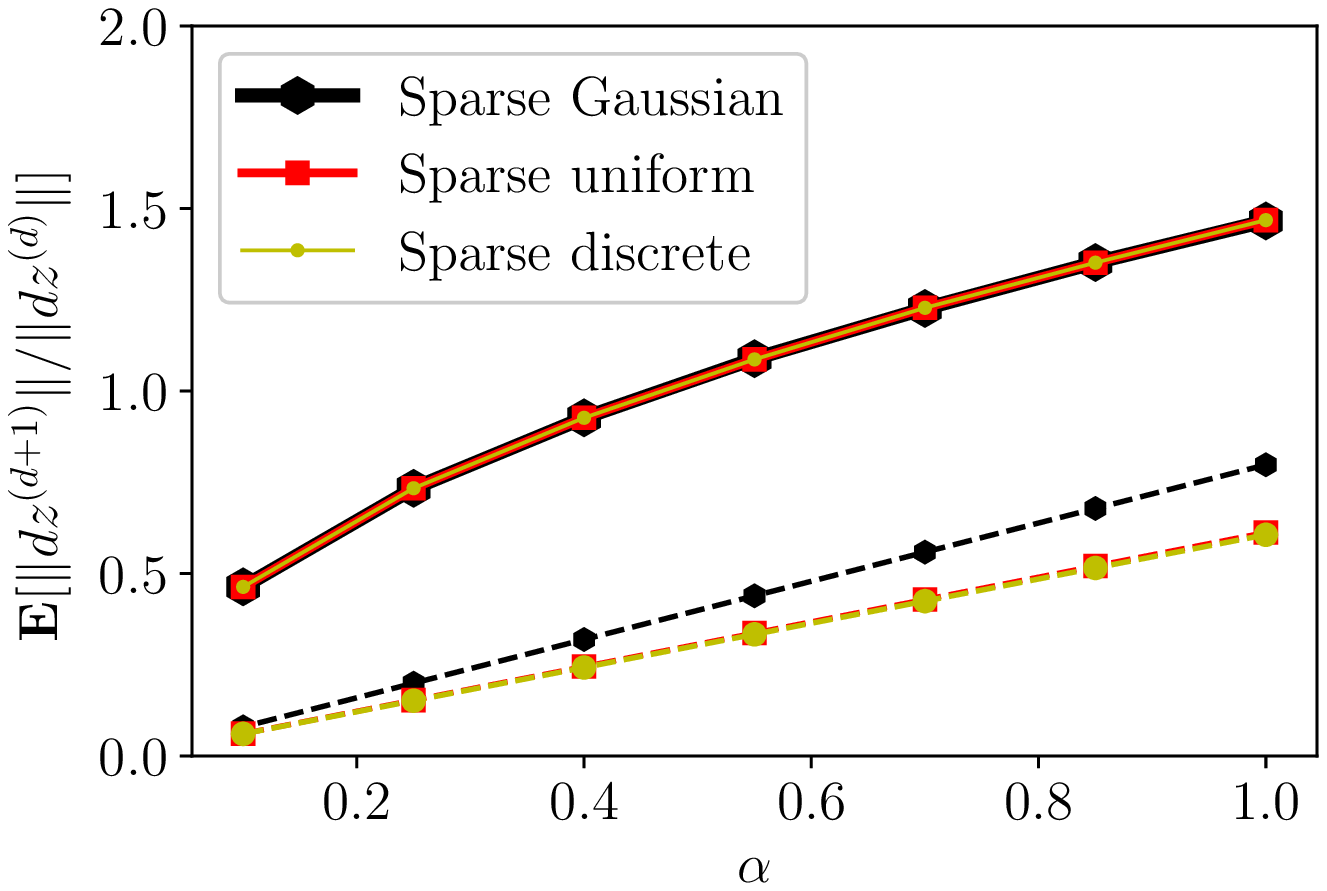}
\caption{}
\label{fig: alpha_dep}
\end{subfigure}
\caption{Expected growth factor, that is, the expected ratio of the length of any very small line segment in layer $d+1$ to its length in layer  $d$. \Figref{fig: sig_dep} shows the dependence on the variance of the weights' distribution, and \Figref{fig: alpha_dep} shows the dependence on sparsity.}
\label{fig: simulations}
\end{figure}
In Figures \ref{fig: sig_dep} and \ref{fig: alpha_dep} we plot the growth ratio of a small piece of the trajectory from one layer to the next, averaged over all pieces, at all layers, and across all 100 networks for a given distribution. This $\E[\|dz^{(d+1)}\|/\|dz^{(d)}\|]$ corresponds to the base of the exponential in our lower bound. The solid lines reflect the observed averages of this ratio, while the dashed lines reflect the lower bound from Corollaries \ref{thm: sparse}, \ref{thm: uniform}, and \ref{thm: discrete}. Figure \ref{fig: sig_dep} illustrates the dependence on the standard deviation of the respective distributions (before scaling by $1/\sqrt{k}$), with $\alpha$ fixed at $\alpha=0.5$. We observe both that the lower bounds clearly hold, and that the dependence on $\sigma_w$ is linear in practice, exactly as we expect from our lower bounds. Figure \ref{fig: alpha_dep} shows the dependence of this ratio on the sparsity parameter $\alpha$, where we have fixed $\sigma_w = 2$ for all distributions. Once again, the lower bounds hold, but in this case there is a slight curve in the observed values, not a strictly linear relationship. The reason for this is that the linear bound we provide is necessary in order to account for the more pathological cases of $dz$. This is discussed in more depth in Appendix \ref{appendix: alpha dep}. 

One striking observation in Figures \ref{fig: sig_dep} and \ref{fig: alpha_dep} is that for a given $\sigma_w$, the observed $\E[\|dz^{(d+1)}\|/\|dz^{(d)}\|]$ matches perfectly across all three distributions, for different values of $\sigma_w$ and different $\alpha$. This remains true when we repeat the experiments with different datapoints, and with points chosen uniformly at random in a high-dimensional space, both when the trajectory considered is a straight line and when it is not (e.g. arcs in two or more dimensions.) See Appendix \ref{appendix: additional experiments} for these figures. Another implication of these experiments is that they give some guidance for how to trade off weight scale against sparsity depending on the desired network properties. For example, Figure 3b considers the initialisation scheme with $\sigma_w = 2/\sqrt{k}$. We see that the empirically observed growth factor from one layer to the next is approximately 1.5 when the matrices are dense ($\alpha=1$), while the growth factor is 1 with $\alpha \approx 0.5$, and less than one as $\alpha$ decreases further. 

\section{Conclusion}

Our proof strategy and results generalise and extend previous work by \cite{raghu2017expressive} to develop theoretical guarantees lower bounding expected trajectory growth through deep neural networks for a broader class of network weight distributions and the setting of sparse networks. We illustrate this approach with Gaussian, uniform, and discrete valued random weight matrices with any sparsity level.

\bibliography{biblio}

\begin{thebibliography}{10}

\bibitem{aghasi2017net}
Alireza Aghasi, Afshin Abdi, Nam Nguyen, and Justin Romberg.
\newblock Net-trim: Convex pruning of deep neural networks with performance
  guarantee.
\newblock In {\em Advances in Neural Information Processing Systems}, pages
  3177--3186, 2017.

\bibitem{ahmad2019can}
Subutai Ahmad and Luiz Scheinkman.
\newblock How can we be so dense? the benefits of using highly sparse
  representations.
\newblock {\em arXiv preprint arXiv:1903.11257}, 2019.

\bibitem{cybenko1992approximation}
George Cybenko.
\newblock Approximation by superpositions of a sigmoidal function.
\newblock {\em Mathematics of Control, Signals, and Systems (MCSS)},
  5(4):455--455, 1992.

\bibitem{ferger2014optimal}
Dietmar Ferger.
\newblock Optimal constants in the marcinkiewicz--zygmund inequalities.
\newblock {\em Statistics \& Probability Letters}, 84:96--101, 2014.

\bibitem{frankle2018the}
Jonathan Frankle and Michael Carbin.
\newblock The lottery ticket hypothesis: Finding sparse, trainable neural
  networks.
\newblock In {\em International Conference on Learning Representations}, 2019.

\bibitem{glorot2010understanding}
Xavier Glorot and Yoshua Bengio.
\newblock Understanding the difficulty of training deep feedforward neural
  networks.
\newblock In {\em Proceedings of the thirteenth international conference on
  artificial intelligence and statistics}, pages 249--256, 2010.

\bibitem{guss2018characterizing}
William~H Guss and Ruslan Salakhutdinov.
\newblock On characterizing the capacity of neural networks using algebraic
  topology.
\newblock {\em arXiv preprint arXiv:1802.04443}, 2018.

\bibitem{han2015learning}
Song Han, Jeff Pool, John Tran, and William Dally.
\newblock Learning both weights and connections for efficient neural network.
\newblock In {\em Advances in neural information processing systems}, pages
  1135--1143, 2015.

\bibitem{hanin2019complexity}
Boris Hanin and David Rolnick.
\newblock Complexity of linear regions in deep networks.
\newblock In {\em International Conference on Machine Learning}, pages
  2596--2604, 2019.

\bibitem{hornik1989multilayer}
Kurt Hornik, Maxwell Stinchcombe, and Halbert White.
\newblock Multilayer feedforward networks are universal approximators.
\newblock {\em Neural networks}, 2(5):359--366, 1989.

\bibitem{huang2018provably}
Wen Huang, Paul Hand, Reinhard Heckel, and Vladislav Voroninski.
\newblock A provably convergent scheme for compressive sensing under random
  generative priors.
\newblock {\em arXiv preprint arXiv:1812.04176}, 2018.

\bibitem{hubara2016binarized}
Itay Hubara, Matthieu Courbariaux, Daniel Soudry, Ran El-Yaniv, and Yoshua
  Bengio.
\newblock Binarized neural networks.
\newblock In {\em Advances in neural information processing systems}, pages
  4107--4115, 2016.

\bibitem{hubara2017quantized}
Itay Hubara, Matthieu Courbariaux, Daniel Soudry, Ran El-Yaniv, and Yoshua
  Bengio.
\newblock Quantized neural networks: Training neural networks with low
  precision weights and activations.
\newblock {\em The Journal of Machine Learning Research}, 18(1):6869--6898,
  2017.

\bibitem{li2017training}
Hao Li, Soham De, Zheng Xu, Christoph Studer, Hanan Samet, and Tom Goldstein.
\newblock Training quantized nets: A deeper understanding.
\newblock In {\em Advances in Neural Information Processing Systems}, pages
  5811--5821, 2017.

\bibitem{lu2017expressive}
Zhou Lu, Hongming Pu, Feicheng Wang, Zhiqiang Hu, and Liwei Wang.
\newblock The expressive power of neural networks: A view from the width.
\newblock In {\em Advances in neural information processing systems}, pages
  6231--6239, 2017.

\bibitem{manoel2017multi}
Andre Manoel, Florent Krzakala, Marc M{\'e}zard, and Lenka Zdeborov{\'a}.
\newblock Multi-layer generalized linear estimation.
\newblock In {\em 2017 IEEE International Symposium on Information Theory
  (ISIT)}, pages 2098--2102. IEEE, 2017.

\bibitem{montufar2014number}
Guido~F Montufar, Razvan Pascanu, Kyunghyun Cho, and Yoshua Bengio.
\newblock On the number of linear regions of deep neural networks.
\newblock In {\em Advances in neural information processing systems}, pages
  2924--2932, 2014.

\bibitem{Poole}
Ben Poole, Subhaneil Lahiri, Maithra Raghu, Jascha Sohl-Dickstein, and Surya
  Ganguli.
\newblock Exponential expressivity in deep neural networks through transient
  chaos.
\newblock In {\em Advances in Neural Information Processing Systems 29}, pages
  3360--3368. 2016.

\bibitem{raghu2017expressive}
Maithra Raghu, Ben Poole, Jon Kleinberg, Surya Ganguli, and Jascha~Sohl
  Dickstein.
\newblock On the expressive power of deep neural networks.
\newblock In {\em Proceedings of the 34th International Conference on Machine
  Learning-Volume 70}, pages 2847--2854. JMLR.org, 2017.

\bibitem{wu2018training}
Shuang Wu, Guoqi Li, Feng Chen, and Luping Shi.
\newblock Training and inference with integers in deep neural networks.
\newblock In {\em International Conference on Learning Representations}, 2018.

\bibitem{yang2018energyconstrained}
Haichuan Yang, Yuhao Zhu, and Ji~Liu.
\newblock Energy-constrained compression for deep neural networks via weighted
  sparse projection and layer input masking.
\newblock In {\em International Conference on Learning Representations}, 2019.

\end{thebibliography}
\bibliographystyle{plain}
\newpage
\appendix
\section{Supporting Lemmas}

\begin{lemma}\label{lemma:  continuous case - even distribution of linear transformation}
    Let $f_X(\rvx)$ be an even joint probability density function over random vector $X \in \R^k$. Let $A\in \R^{k \times k}$ be an invertable linear transformation such that $Y = AX$. Then the joint density $f_Y (\rvy)$ is also even.
\end{lemma}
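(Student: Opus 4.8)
The plan is to invoke the standard change-of-variables (Jacobian) formula for densities of linear transformations, and then exploit the fact that negation commutes with the linear map $A^{-1}$. Concretely, since $A$ is invertible, $X = A^{-1}Y$, and the density of $Y$ is given by
\begin{align*}
f_Y(\rvy) = \frac{f_X(A^{-1}\rvy)}{|\det A|}.
\end{align*}
I would first state this formula (it is the classical result for pushing a density forward through an invertible affine/linear map, here with zero shift), noting that $|\det A| \neq 0$ precisely because $A$ is invertible, so the expression is well defined everywhere.

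Next I would simply evaluate $f_Y$ at $-\rvy$ and push the sign through $A^{-1}$, which is legitimate because $A^{-1}$ is linear: $A^{-1}(-\rvy) = -A^{-1}\rvy$. This gives
\begin{align*}
f_Y(-\rvy) = \frac{f_X(A^{-1}(-\rvy))}{|\det A|} = \frac{f_X(-A^{-1}\rvy)}{|\det A|}.
\end{align*}
Then I apply the hypothesis that $f_X$ is even, i.e. $f_X(-\rvx) = f_X(\rvx)$ for all $\rvx \in \R^k$, taking $\rvx = A^{-1}\rvy$, to conclude $f_X(-A^{-1}\rvy) = f_X(A^{-1}\rvy)$, and hence $f_Y(-\rvy) = f_X(A^{-1}\rvy)/|\det A| = f_Y(\rvy)$. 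Since $\rvy$ was arbitrary, $f_Y$ is even.

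There is no real obstacle here; the only point requiring a word of care is the justification of the Jacobian formula and the observation that the determinant term $|\det A|$ is a constant that is unaffected by replacing $\rvy$ with $-\rvy$, so it plays no role in the symmetry argument. (If one wanted to avoid even citing the density transformation formula, one could argue at the level of distributions: for any Borel set $B$, $\Pr(Y \in -B) = \Pr(AX \in -B) = \Pr(X \in -A^{-1}B) = \Pr(X \in A^{-1}B) = \Pr(Y \in B)$ by symmetry of the law of $X$, so $Y$ and $-Y$ have the same law; but since the statement is phrased in terms of densities, the Jacobian version is the cleaner match.)
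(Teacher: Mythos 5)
Your proof is correct and follows essentially the same route as the paper's: apply the change-of-variables formula $f_Y(\rvy) = f_X(A^{-1}\rvy)\,|\det A|^{-1}$, push the negation through the linear map $A^{-1}$, and invoke the evenness of $f_X$. The only difference is cosmetic (the paper writes the Jacobian factor as $|A^{-1}|$), and your observation that the determinant constant is unaffected by the sign flip matches the paper's reasoning exactly.
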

\begin{proof}
Wlog we assume $f_X $ is defined on $\R^k$.  To calculate the density over $Y \in \R^k$ we make a change of variables such that
\begin{align}
    f_Y (\rvy) = f_X(A^{-1}\rvy)|A^{-1}|.
\end{align}
Since $A$ is one-to-one, we have that $f_X(\vx) = f_X(A^{-1}\vy)$ for some $\vy$, and $f_X$ is even, so $f_X (A^{-1}\vy) = f_X (-(A^{-1}\vy)) = f_X (A^{-1}(-\vy))$ for all $\vy$. Putting this together completes the proof,
\begin{align}
    f_Y (\rvy) = f_X(A^{-1}\rvy)|A^{-1}| = f_X(A^{-1}(-\rvy))|A^{-1}| = f_Y (-\rvy)
\end{align}
\end{proof}

\begin{lemma}\label{lemma:  discrete case - even distribution of linear transformation}
    Let $f_X(\rvx)$ be an even joint probability mass function over random vector $X \in \R^k$. Let $A\in \R^{k \times k}$ be an invertable linear transformation such that $Y = AX$. Then the joint mass function $f_Y (\rvy)$ is also even.
\end{lemma}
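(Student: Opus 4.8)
The plan is to follow the structure of the proof of Lemma~\ref{lemma:  continuous case - even distribution of linear transformation}, but to replace the change-of-variables-with-Jacobian step by the cleaner observation that an invertible linear map induces a bijection between the (countable) supports of $X$ and $Y$, so probability mass is transported without any scaling factor. First I would record that, since $Y = AX$ with $A$ invertible, the event $\{Y = \rvy\}$ coincides with $\{X = A^{-1}\rvy\}$ for every $\rvy \in \R^k$, so that
\begin{align*}
f_Y(\rvy) = \sP(Y = \rvy) = \sP(AX = \rvy) = \sP(X = A^{-1}\rvy) = f_X(A^{-1}\rvy).
\end{align*}
This identity holds for all $\rvy$, including those outside the support of $Y$: if $\rvy \notin A(\operatorname{supp} X)$ then $A^{-1}\rvy \notin \operatorname{supp} X$, so both sides vanish. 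Note that, unlike the continuous case, there is no $|A^{-1}|$ factor, so the continuous proof cannot be copied verbatim.

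Next I would invoke evenness of $f_X$. Since $f_X(\rvv) = f_X(-\rvv)$ for every $\rvv \in \R^k$, applying this with $\rvv = A^{-1}\rvy$ and using linearity of $A^{-1}$ to rewrite $-(A^{-1}\rvy) = A^{-1}(-\rvy)$ gives $f_X(A^{-1}\rvy) = f_X(A^{-1}(-\rvy))$. Combining this with the displayed identity, used once at $\rvy$ and once at $-\rvy$, yields
\begin{align*}
f_Y(\rvy) = f_X(A^{-1}\rvy) = f_X(A^{-1}(-\rvy)) = f_Y(-\rvy),
\end{align*}
which is exactly the assertion that $f_Y$ is even.

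I do not expect a genuine obstacle here; the subtlety to be careful about is purely bookkeeping. It is worth stating explicitly at the outset that $f_X$ (and hence $f_Y$) is a probability mass function carried by a countable set, and that ``even'' is to be read pointwise on all of $\R^k$, with $f_X$ taking the value $0$ off its support — this makes the reflection $\rvy \mapsto -\rvy$ meaningful irrespective of whether the support happens to be a lattice, and it is what guarantees the key identity $f_Y(\rvy) = f_X(A^{-1}\rvy)$ holds everywhere rather than only on the support. With that convention in place the proof is three lines, mirroring the continuous argument but dispensing with the Jacobian.
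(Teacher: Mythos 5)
Your proof is correct and follows essentially the same route as the paper's: both reduce to the identity $f_Y(\rvy)=f_X(A^{-1}\rvy)$ (the paper writes this as a sum over the preimage set $\{A\vx=\vy\}$, which collapses to a single term by injectivity of $A$) and then apply evenness of $f_X$ together with linearity of $A^{-1}$. Your explicit remarks that no Jacobian factor appears and that the identity holds pointwise on all of $\R^k$ are just careful bookkeeping on the same argument.
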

\begin{proof}
$f_X $ is defined on some discrete, finite, symmetric set $\mathcal{X}$.  To calculate the density over $Y \in \mathcal{Y}:=\{Ap: p \in \mathcal{X}\}$ we make a change of variables such that
\begin{align}
    f_Y (\rvy) = \sum_{\vx \in \{A\vx = \vy \}} f_X(\rvx).
\end{align}
Since $A$ is one-to-one, we have that $f_X(\vx) = f_X(A^{-1}\vy)$ for some $\vy$, and $f_X$ is even, so $f_X (A^{-1}\vy) = f_X (-(A^{-1}\vy)) = f_X (A^{-1}(-\vy))$ for all $\vy$. Putting this together completes the proof,
\begin{align}
    f_Y (\rvy) = \sum_{\vx \in \{A\vx = \vy \}} f_X(A^{-1}\vy) = \sum_{\vx \in \{A\vx = \vy \}} f_X(A^{-1}(-\vy)) = f_Y (-\rvy)
\end{align}
\end{proof}

\begin{lemma}\label{lemma: continuous case, even after marginalisation}
	Let $f_{X_1, \dots, X_k}(x_1, \dots, x_k)$ be an even probability density function. Then $f_{X_1, \dots, X_{k-1}}(x_1, \dots, x_{k-1}) =  \int_{-\infty}^{\infty} f_{X_1, \dots, X_k}(x_1, \dots, x_k)  dx_k$ is also even. 
\end{lemma}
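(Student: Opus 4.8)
The plan is to start from the very definition of the marginal density as the integral of the joint density over the last coordinate, and then exploit the hypothesised evenness of the joint density together with a sign-flip change of variables in that integral. Concretely, first I would write
\[
f_{X_1, \dots, X_{k-1}}(-x_1, \dots, -x_{k-1}) = \int_{-\infty}^{\infty} f_{X_1, \dots, X_k}(-x_1, \dots, -x_{k-1}, x_k)\, dx_k .
\]
Next, since $f_{X_1, \dots, X_k}$ is even, the integrand equals $f_{X_1, \dots, X_k}(x_1, \dots, x_{k-1}, -x_k)$. Finally, substituting $u = -x_k$ (so $du = -dx_k$, and the reversal of the limits of integration cancels the minus sign) turns the right-hand side into $\int_{-\infty}^{\infty} f_{X_1, \dots, X_k}(x_1, \dots, x_{k-1}, u)\, du = f_{X_1, \dots, X_{k-1}}(x_1, \dots, x_{k-1})$, which is exactly the asserted evenness.

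The only place needing a moment's care is the change of variables in the last step: the map $x_k \mapsto -x_k$ has Jacobian of absolute value $1$, and the orientation reversal of $\R$ precisely compensates the sign, so the value of the integral is unchanged; integrability of $f_{X_1,\dots,X_k}$ (it is a density) guarantees everything is well defined and that the marginal exists. I do not anticipate any genuine obstacle here — the statement follows immediately from the definition of marginalisation plus the symmetry of the integrand. I would also note in passing that the identical argument, with the integral replaced by a finite sum over the discrete symmetric support, yields the discrete counterpart (Lemma \ref{lemma: discrete case, even after marginalisation}); indeed, by relabelling coordinates the lemma applies to marginalising out any one of the $X_j$, which is all that is used in Stage 3 of the proof of Theorem \ref{thm: general sparse}.
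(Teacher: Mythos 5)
Your proof is correct and is essentially the paper's own argument read in the reverse direction: both start from the definition of the marginal, apply evenness of the joint density, and undo the sign on the last coordinate via the substitution $x_k \mapsto -x_k$. No gaps; the remark about the discrete analogue and relabelling coordinates matches how the lemma is used in the paper.
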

\begin{proof}
	\begin{align*}
		f_{X_1, \dots, X_{k-1}}(x_1, \dots, x_{k-1}) &= \int_{-\infty}^{\infty} f_{X_1, \dots, X_k}(x_1, \dots, x_k) dx_k \\
		& =\int_{-\infty}^{\infty} f_{X_1, \dots, X_k}(-x_1, \dots, -x_k) dx_k\\
		& = \int_{-\infty}^{\infty} f_{X_1, \dots, X_k}(-x_1, \dots, -x_{k-1}, x_k) dx_k\\
		& = f_{X_1, \dots, X_{k-1}}(-x_1, \dots, -x_{k-1}) 
	\end{align*}
The first and last equalities follow from the definition of marginalisation of random variables. The second equality follows from the assumption that $f_{X_1, \dots, X_k}$ is even, and the third equality follows from the change of variables: $-x_k \longrightarrow x_k$. 
\end{proof}

\begin{lemma}\label{lemma: discrete case, even after marginalisation}
	Let $X_1,\dots,X_k$ be discrete random variables with symmetric support sets $\mathcal{X}_1,\dots, \mathcal{X}_k$ respectively, i.e. $x_i \in \mathcal{X}_j \iff -x_i \in \mathcal{X}_j$. Let $P(X_1=x_1, \dots, X_k=x_k)$ be an even probability mass function such that $P(X_1=x_1, \dots, X_k=x_k)=P(X_1=-x_1, \dots, X_k=-x_k)$ .
	
	Then $P(X_1=x_1, \dots, X_{k-1}=x_{k-1})$ is also even. 
\end{lemma}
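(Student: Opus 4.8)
The plan is to mirror the argument of Lemma~\ref{lemma: continuous case, even after marginalisation} verbatim, replacing the integral over $x_k$ with a sum over the support set $\mathcal{X}_k$. First I would write the marginal as
\begin{align*}
P(X_1 = x_1, \dots, X_{k-1} = x_{k-1}) = \sum_{x_k \in \mathcal{X}_k} P(X_1 = x_1, \dots, X_k = x_k),
\end{align*}
which is just the definition of marginalisation for discrete random variables.

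Next I would apply the evenness hypothesis on the joint pmf to replace each summand $P(X_1 = x_1, \dots, X_k = x_k)$ by $P(X_1 = -x_1, \dots, X_k = -x_k)$, leaving the sum otherwise untouched. Then I would perform the change of summation index $x_k \mapsto -x_k$. This is the one place where the symmetry assumption $x_k \in \mathcal{X}_k \iff -x_k \in \mathcal{X}_k$ is needed: it guarantees that as $x_k$ ranges over $\mathcal{X}_k$, so does $-x_k$, so the re-indexed sum is over exactly the same set and no terms are gained or lost. After re-indexing, the summand becomes $P(X_1 = -x_1, \dots, X_{k-1} = -x_{k-1}, X_k = x_k)$, and summing over $x_k \in \mathcal{X}_k$ gives, again by the definition of marginalisation, $P(X_1 = -x_1, \dots, X_{k-1} = -x_{k-1})$. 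Chaining the equalities yields $P(X_1 = x_1, \dots, X_{k-1} = x_{k-1}) = P(X_1 = -x_1, \dots, X_{k-1} = -x_{k-1})$, which is the claim.

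I do not expect any genuine obstacle here; the only point requiring care is that the reindexing of a discrete sum is legitimate only because $\mathcal{X}_k$ is a symmetric set, so I would state that explicitly rather than treating it as automatic (unlike the continuous case, where the substitution $-x_k \to x_k$ on $\R$ is unconditional). An induction on the number of marginalised coordinates is available if one wants to integrate out several variables at once, but for the stated lemma a single application suffices.
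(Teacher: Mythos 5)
Your proposal is correct and follows essentially the same route as the paper's own proof: marginalise as a sum over $\mathcal{X}_k$, apply the evenness of the joint pmf, re-index $x_k \mapsto -x_k$, and invoke the symmetry of $\mathcal{X}_k$ to identify the re-indexed sum with the original one. Your explicit remark that the symmetry of the support set is precisely what licenses the re-indexing matches the paper's step justified by ``summing over $-x_k$ is equivalent to summing over $x_k$.''
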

\begin{proof}
	
	\begin{align}
	P(X_1=x_1, \dots, X_{k-1}=x_{k-1}) &= \sum_{x_k \in \mathcal{X}_k} P(X_1=x_1, \dots, X_{k}=x_{k}) \label{discrete marginal 1}\\
	& = \sum_{x_k \in \mathcal{X}_k} P(X_1=-x_1, \dots, X_k=-x_k) \label{even by assumption}\\
	& = \sum_{-x_k \in \mathcal{X}_k} P(X_1=-x_1, \dots, X_k=x_k) \label{change of vars}\\
	& = \sum_{x_k \in \mathcal{X}_k} P(X_1=-x_1, \dots, X_k=x_k) \label{same support set}\\
	& = P(X_1=-x_1, \dots, X_{k-1}=-x_{k-1}) \label{discrete marginal 2}
	\end{align}
	Lines \ref{discrete marginal 1} and \ref{discrete marginal 2} follow from the definition of marginal distributions, \eqref{even by assumption} follows by assumption, \eqref{change of vars} follows fro a change of variables, and \eqref{same support set} follows since summing over $-x_k$ is equivalent to summing over $x_k$.
	
\end{proof}

\begin{lemma} \label{lemma: conditional unnecesary}
	Let $X$ and $Y$ be random variables with an even joint probability density function $f_{XY}(x,y)$. Then
	\begin{align*}
	\E[|X| \ | \ Y>0] = \E [|X|]
	\end{align*}
\end{lemma}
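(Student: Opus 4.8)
The plan is to exploit the joint evenness $f_{XY}(-x,-y)=f_{XY}(x,y)$ via the sign-flip change of variables $(x,y)\mapsto(-x,-y)$. First I would record its consequences for the marginal $f_Y(y)=\int_{-\infty}^{\infty}f_{XY}(x,y)\,dx$: by Lemma~\ref{lemma: continuous case, even after marginalisation} (with $k=2$) it is even, so $P(Y>0)=P(Y<0)$, and since $f_Y$ is a genuine density we have $P(Y=0)=0$; hence $P(Y>0)=P(Y<0)=\tfrac12$. In particular the conditioning event has positive probability, so $\E[|X|\mid Y>0]$ is well defined.

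Next I would write out the conditional expectation from its definition and use $P(Y>0)=\tfrac12$,
\begin{align*}
\E[|X|\mid Y>0] \;=\; \frac{1}{P(Y>0)}\int_{-\infty}^{\infty}\!\int_{0}^{\infty}|x|\,f_{XY}(x,y)\,dy\,dx \;=\; 2\int_{-\infty}^{\infty}\!\int_{0}^{\infty}|x|\,f_{XY}(x,y)\,dy\,dx .
\end{align*}
Then I would apply the substitution $(x,y)\mapsto(-x,-y)$, whose Jacobian has absolute value $1$, which carries $\{y>0\}$ bijectively onto $\{y<0\}$, fixes $|x|$, and (by hypothesis) leaves $f_{XY}$ unchanged; this identifies the integral over $\{y>0\}$ with the integral of the same integrand over $\{y<0\}$, so that
\begin{align*}
2\int_{-\infty}^{\infty}\!\int_{0}^{\infty}|x|\,f_{XY}(x,y)\,dy\,dx \;=\; \int_{-\infty}^{\infty}\!\int_{0}^{\infty}|x|\,f_{XY}\,dy\,dx + \int_{-\infty}^{\infty}\!\int_{-\infty}^{0}|x|\,f_{XY}\,dy\,dx \;=\; \E[|X|],
\end{align*}
which is exactly the desired identity.

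I do not expect a real obstacle: the argument is a one-line symmetrization once the bookkeeping is in place. The only points that need care are that the conditioning event has positive probability and that $\{Y=0\}$ is null — both immediate from $f_{XY}$ being a bona fide density with even $Y$-marginal — and, for the discrete counterpart (Lemma~\ref{conditional unnecesary discrete}), the matching requirement flagged in its preamble that $P(Y=0)=0$ in the context where it is invoked; granting that, the same computation goes through verbatim with sums replacing integrals.
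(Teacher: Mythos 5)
Your proof is correct and rests on the same mechanism as the paper's: exploiting the evenness of $f_{XY}$ under $(x,y)\mapsto(-x,-y)$ to identify the integral over $\{y>0\}$ with that over $\{y<0\}$, together with $P(Y>0)=\tfrac12$. The paper reaches the same identity more circuitously (via the density of $Z=|X|$ and a comparison of two expansions), so your direct symmetrization of the double integral is a streamlined but essentially identical argument.
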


\begin{proof}
	Letting $|X|=Z$, we can make a straightforward change of variables to calculate the joint distribution $f_{ZY}(z,y)$, which works out to be 
	\begin{align*}
	f_{ZY}(z,y) = f_{XY}(z,y) + f_{XY}(-z, y)
	\end{align*}
	for $z\geq0$ and $y\in \R$. Then we have that
	\begin{align*}
	\E [Z| Y>0]& = \int_0^\infty z \cdot f_{Z|Y>0}(z|y>0) dz \\
	& = \int_0^\infty z \cdot \frac{f_{Z,Y>0}(z,y>0)}{\int_0^\infty f_{Y}(y)dy} dz \\
	& = 2 \int_0^\infty z \cdot f_{Z,Y>0}(z,y>0) dz \\
	& = 2 \int_0^\infty z \int_0^\infty f_{ZY}(z,y) dy dz\\
	& = 2 \int_0^\infty z \int_0^\infty (f_{XY}(z,y) +  f_{XY}(-z,y)) dy dz.
	\end{align*}
	One the other hand, we have that 
	\begin{align*}
	\E [Z]& = \int_0^\infty z \cdot f_Z(z) dz \\
	& = \int_0^\infty z \cdot (f_X(z) + f_X(-z)) dz \\
	& = 2 \int_0^\infty z \cdot f_X(z) dz \\
	& = 2 \int_0^\infty z \cdot \int_{-\infty}^{\infty}f_{XY}(z,y) dy dz \\
	& = 2 \int_0^\infty z \cdot \left( \int_{-\infty}^{0}f_{XY}(z,y) dy +  \int_{0}^{\infty}f_{XY}(z,y) dy \right) dz \\
	\end{align*}
	Comparing the expressions for $\E [Z| Y>0]$ and $\E[Z]$, we can see that they are equal if
	\begin{align*}
	\int_{-\infty}^{0}f_{XY}(z,y) dy =  \int_0^\infty f_{XY}(-z,y)dy.
	\end{align*}
	A change of variables on the left hand side from $y$ to $-y$ yields
	\begin{align*}
	\int_{-\infty}^{0}f_{XY}(z,y) dy =  \int_0^\infty f_{XY}(z,-y)dy.
	\end{align*}
	and by assumption, we know that $f_{XY}(z,-y) = f_{XY}(-z,y)$ since $f_{XY}$ is even, which completes the proof.
	
\end{proof}

Lemma \ref{lemma: conditional unnecesary} implicitly makes use of the fact that $P(Y=0) =0$, which follows from $w_\Ji$ and $\hat{b}_i$ being continuous random variables, and $Y=w_\Ji^\top z_\Ji + \hat{b}_i$, with $z_\Ji$ being fixed independent of $w_\Ji$.  We similarly make use of the fact that $P(Y=0)=0$ in the application of Lemma \ref{conditional unnecesary discrete}, though that this is true is less immediately apparent in the discrete case. For clarity, let us define $\vv:= [w_\Ji, \hat{b}_i]$, the concatenation of $w_\Ji$ and $\hat{b}_i$, and $\hat{\vz}:= [z_\Ji, 1]$, the concatenation of $z_\Ji$ and $1$, such that $Y = \vv^\top \hat{\vz}$.  Associated with the discrete distribution over $\vv$ there are $N_w^{|\Ji|}N_b$ possible discrete random vectors in $\R^{|\Ji|+1}$.  The set of vectors $\hat{\vz} \in \R^{|\Ji|+1}$ orthogonal to such a discrete set is measure zero, and  as such for $\hat{\vz}$ fixed independent of the choice of the discrete measure $\vv$ we have $P(\vv^\top \hat{\vz}=0)=0$.  If however $\hat{\vz}$ were selected with knowledge of  the discrete distribution $\vv$ then one of two cases will occur; either $\vv^\top \hat{\vz}\neq 0$, or $\hat{\vz}$ is selected to be from the measure zero set of vectors orthogonal to any of the $N_w^{|\Ji|}N_b$ vectors generated by $\vv$. In the latter case, the assumptions in Lemma \ref{conditional unnecesary discrete} of $\mathcal{Y}$ excluding 0 would not be satisfied.  In such an adversarial case there would be a discrepancy between $\E[|X| \ | \ Y>0]$ and $\E [|X|]$ which would shrink as the proportion of the $N_w^{|\Ji|}N_b$ vectors generated by $\vv$ to which that particular $\hat{\vz}$ is orthogonal.

\begin{lemma} \label{conditional unnecesary discrete}
	Let $X$ and $Y$ be discrete random variables with finite, symmetric support sets $\mathcal{X}$ and $\mathcal{Y}$ respectively, where $0 \notin \mathcal{Y}$, and an even joint probability mass function $f_{XY}(x,y)$ such that $P(X = x, Y=y) = P(X=-x,Y=-y)$. Then
	\begin{align*}
	\E[|X| \ | \ Y>0] = \E [|X|]
	\end{align*}
\end{lemma}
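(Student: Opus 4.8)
The plan is to mirror the change-of-variables argument of Lemma \ref{lemma: conditional unnecesary} in the discrete setting, replacing integrals with sums and being careful about the excluded value $Y = 0$. First I would introduce $Z := |X|$ and compute the joint mass function of $(Z, Y)$. For each $z$ in the (symmetric) support of $X$ with $z \geq 0$ and each $y \in \mathcal{Y}$, we have $f_{ZY}(z,y) = f_{XY}(z,y) + f_{XY}(-z,y)$ when $z > 0$, and $f_{ZY}(0,y) = f_{XY}(0,y)$ when $z = 0$; the boundary case $z=0$ is harmless since it contributes nothing to either $\E[Z \mid Y > 0]$ or $\E[Z]$.

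Next I would write out both quantities explicitly as sums. On one side,
\begin{align*}
\E[Z \mid Y > 0] = \frac{\sum_{z \geq 0}\sum_{y > 0} z\, f_{ZY}(z,y)}{\sum_{y>0} f_Y(y)}.
\end{align*}
Because $f_{XY}$ is even and $0 \notin \mathcal{Y}$, we have $\sum_{y>0} f_Y(y) = \sum_{y<0} f_Y(y) = \tfrac12$, so the denominator is $\tfrac12$ and
\begin{align*}
\E[Z \mid Y>0] = 2\sum_{z>0}\sum_{y>0} z\,(f_{XY}(z,y) + f_{XY}(-z,y)).
\end{align*}
On the other side, $\E[Z] = \sum_{z>0} z\,(f_X(z) + f_X(-z)) = 2\sum_{z>0} z\, f_X(z) = 2\sum_{z>0}\sum_{y \in \mathcal{Y}} z\, f_{XY}(z,y)$, and splitting the inner sum over $y>0$ and $y<0$ gives
\begin{align*}
\E[Z] = 2\sum_{z>0}\sum_{y>0} z\, f_{XY}(z,y) + 2\sum_{z>0}\sum_{y<0} z\, f_{XY}(z,y).
\end{align*}
Comparing the two expressions, equality holds provided $\sum_{y<0} f_{XY}(z,y) = \sum_{y>0} f_{XY}(-z,y)$ for each $z > 0$. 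Reindexing the left sum by $y \mapsto -y$ (legitimate since $\mathcal{Y}$ is symmetric) turns it into $\sum_{y>0} f_{XY}(z,-y)$, which equals $\sum_{y>0} f_{XY}(-z,y)$ by evenness of $f_{XY}$. This closes the argument.

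The main thing to be careful about — the only place the discrete case differs substantively from the continuous one — is the role of the hypothesis $0 \notin \mathcal{Y}$: it is what makes $P(Y>0) = P(Y<0) = \tfrac12$ exact, and without it the normalisation and the symmetric splitting of the $y$-sum would both fail. As the preamble to the lemma explains, this hypothesis is precisely what rules out the adversarial configurations where $\hat{\vz}$ is chosen orthogonal to some of the finitely many vectors in the support of $\vv$; in the intended application $\hat{\vz}$ is fixed independently of that support, so $P(Y=0)=0$ and the hypothesis is automatically met. Everything else is a routine transcription of Lemma \ref{lemma: conditional unnecesary} with $\int \mapsto \sum$, so I would not anticipate any real obstacle beyond bookkeeping.
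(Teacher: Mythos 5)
Your proposal is correct and follows essentially the same route as the paper's own proof: define $Z=|X|$, compute $f_{ZY}(z,y)=f_{XY}(z,y)+f_{XY}(-z,y)$, use $0\notin\mathcal{Y}$ and evenness to get $P(Y>0)=\tfrac12$, and reduce the equality to $\sum_{y<0}f_{XY}(z,y)=\sum_{y>0}f_{XY}(-z,y)$ via the reindexing $y\mapsto -y$ and evenness. No gaps; your added remark on why the hypothesis $0\notin\mathcal{Y}$ is the substantive difference from the continuous case matches the paper's preamble to the lemma.
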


\begin{proof}
	Letting $|X|=Z$, we can make a change of variables to obtain the joint mass function $f_{ZY}(z,y)$, which works out to be 
	\begin{align*}
f_{ZY}(z,y) = \begin{cases}
	f_{XY}(z,y) + f_{XY}(-z, y) & \text{for } \ (z,y) \text{ where } z\in \mathcal{X}^+ \text{ and } y \in \mathcal{Y}\\
	f_{XY}(z,y) & \text{for } (z,y) \text{ where } z=0 \text{ and } \in \mathcal{Y} 
	\end{cases}
	\end{align*}
	where $\mathcal{X}^+$ is the set of all positive elements of $\mathcal{X}$. 
	
	Next, we have that 
	\begin{align}
	\E[Z|Y>0] & = \sum_{z\in \mathcal{X}^+} z P(Z=z | Y>0) \nonumber \\
	& = \sum_{z\in \mathcal{X}^+} z \frac{P(Z=z \cap Y>0)}{P(Y>0)} \label{eq:py0_first}\\
	& = 2 \sum_{z\in \mathcal{X}^+} z P(Z=z \cap Y>0) \label{eq:py0_second}\\
	& = 2 \sum_{z\in \mathcal{X}^+} \sum_{y\in \mathcal{Y}^+} z P(Z=z \cap Y=y) \nonumber\\
	& = 2 \sum_{z\in \mathcal{X}^+} \sum_{y\in \mathcal{Y}^+} z \left(f_{XY}(z,y) + f_{XY}(-z, y) \right) \label{exp of Z conditional disc}
	\end{align}
	
	On the other hand, we have 
	
	\begin{align}
	\E[Z] & = \sum_{z\in \mathcal{X}^+} z P(Z=z)\\
	& = \sum_{z\in \mathcal{X}^+} z\left(f_X(z) + f_X(-z)\right)\\
	& = 2 \sum_{z\in \mathcal{X}^+} zf_X(z)\\
	& = 2 \sum_{z\in \mathcal{X}^+} \sum_{y\in \mathcal{Y}} z f_{XY}(z,y)\\
	& = 2 \sum_{z\in \mathcal{X}^+} \left( \sum_{y\in \mathcal{Y}^+} z f_{XY}(z,y) + \sum_{y\in \mathcal{Y}^-} z f_{XY}(z,y) \right) \label{exp of Z discrete}
	\end{align}
	
	Next, we not that 
	\begin{align*}
	\sum_{y\in \mathcal{Y}^-} z f_{XY}(z,y) & = \sum_{y\in \mathcal{Y}^+} z f_{XY}(z,-y) \\
	& = \sum_{y\in \mathcal{Y}^+} z f_{XY}(-z,y) 
	\end{align*}
	Thus the expressions in \ref{exp of Z conditional disc} and \ref{exp of Z discrete} are equal, which completes the proof.
	
\end{proof}

\begin{lemma} [Expected norm of a random sub-vector]\label{lemma: norm of subvector}
	Let $\vu \in \R^k$ be a fixed vector and let $J \subseteq \{1,2,\dots,k\}$ be a random index set, where the probability of any index from $1$ to $k$ appearing in any given sample is independent and equal to $\alpha$. Then, defining $\vu_J$ to be the vector comprised only of the elements of $\vu$ indexed by $J$, we can lower bound the expectation of the norm of this subvector by
	\begin{align}
	\E_J[\|\vu_J\| ] \geq \alpha\|\vu\|
	\end{align}
\end{lemma}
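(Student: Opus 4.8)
The plan is to avoid attacking $\E_J[\sqrt{\sum_i \xi_i u_i^2}]$ directly — where $\xi_1,\dots,\xi_k$ are the independent $\mathrm{Bernoulli}(\alpha)$ indicators of membership in $J$ — because the square root is concave and Jensen's inequality would send the estimate in the wrong direction. Instead I would linearise the norm by testing it against a fixed unit vector before taking the expectation.

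First I would dispose of the trivial case $\vu = \vzero$ and assume $\vu \neq \vzero$. Let $\tilde{\vu}_J \in \R^k$ denote the vector obtained from $\vu$ by setting to zero every coordinate outside $J$, so that $\|\tilde{\vu}_J\| = \|\vu_J\|$ and $\tilde{\vu}_J^\top \vu = \sum_{i \in J} u_i^2 = \sum_{i=1}^k \xi_i u_i^2$. Applying Cauchy--Schwarz with the unit vector $\vu/\|\vu\|$ yields the pointwise (in $J$) lower bound
$$\|\vu_J\| = \|\tilde{\vu}_J\| \;\geq\; \frac{\tilde{\vu}_J^\top \vu}{\|\vu\|} \;=\; \frac{1}{\|\vu\|}\sum_{i=1}^k \xi_i u_i^2 .$$

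The right-hand side is now linear in the $\xi_i$, so I would take expectations over $J$ and use $\E[\xi_i] = \alpha$ together with linearity of expectation:
$$\E_J[\|\vu_J\|] \;\geq\; \frac{1}{\|\vu\|}\sum_{i=1}^k \alpha\, u_i^2 \;=\; \frac{\alpha\|\vu\|^2}{\|\vu\|} \;=\; \alpha\|\vu\| ,$$
which is the claimed inequality. The one subtle point — and the step where a naïve argument breaks down — is getting the direction of the inequality right: everything hinges on replacing $\|\vu_J\|$ by a linear lower bound \emph{before} averaging, and Cauchy--Schwarz against $\vu/\|\vu\|$ is precisely what supplies such a bound. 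Note that nothing here uses independence beyond the computation of $\E[\xi_i]$, so the same argument covers the more general sampling schemes alluded to in the statement.
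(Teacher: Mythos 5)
Your proof is correct and is essentially the paper's own argument: both establish the same pointwise linear lower bound $\|\vu_J\| \geq \frac{1}{\|\vu\|}\sum_{i\in J}u_i^2$ (you via Cauchy--Schwarz against $\vu/\|\vu\|$, the paper via the elementary fact that $\sqrt{\gamma}\geq \gamma/\sqrt{\gamma_{\max}}$ for $0\leq\gamma\leq\gamma_{\max}$), and then both take expectations of the now-linear Bernoulli sum using $\E[\xi_i]=\alpha$. The two justifications of the linearisation step are interchangeable, so there is nothing substantive to distinguish the approaches.
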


\begin{proof}
	First, we bound the expectation of the norm in terms of the expectation of the squared norm as follows:
	\begin{align}
	\E[\|\vu_J \|] & = \E[ \sqrt{\sum_{j \in J} u_{J, j}^2} ] \\
	& \geq \frac{1}{\|\vu\|} \E[\sum_{j \in J} u_{J, j}^2 ] \label{ineq: norm of subvector} 
	\end{align}
	This follows because for any $0 \leq \gamma \leq \text{max}(\gamma)$, $\sqrt{\gamma} \geq \frac{1}{\sqrt{max(\gamma)}} \gamma $.
	
	Next we note that $\sum_{j \in J} u_{J, j}^2$ is exactly equivalent to $\sum_{i=1}^k u_i^2 B_i$, a weighted sum of $k$ iid Bernoulli random variables $B_i$ with $p=\alpha$, and so
	\begin{align}
	    \E[\sum_{j \in J} u_{J, j}^2 ] &= \sum_{i=1}^k u_i^2 \cdot \E[B]\\
	    & = \| \vu \|^2 \cdot \alpha.
	\end{align}
	Substituting this into inequality \ref{ineq: norm of subvector} completes the proof,
	\begin{align*}
	\E[\|\vu_J \|] \geq \alpha \|\vu\|
	\end{align*}
\end{proof}

Lemmas \ref{MZ_inequality} and \ref{optimal_MZ_constants} are taken from \cite{ferger2014optimal}, and are restated here for completeness.

\begin{lemma} [Marcinkiewicz-Zygmund Inequality (\cite{ferger2014optimal})]\label{MZ_inequality}
	Let $X_1 , \dots, X_n$ be $n \in \mathbb{N}$ independent and centered real random variables defined on some probability space $(\Omega , A, P)$ with $\E[|Xi |^p ] < \infty$ for every $i \in \{1, . . . , n\}$ and for some $p > 0$. Then for every $p \geq 1$ there exist positive constants $A_p$ and $B_p$ depending only on $p$ such that
	
	\begin{align}
	A_p \E \left[\left( \sum_{i=1}^n X_i^2\right)^{p/2} \right] \leq \E \left[ \bigg \vert \sum_{i=1}^n X_i \bigg \vert^p \right] \leq B_p \E \left[\left( \sum_{i=1}^n X_i^2\right)^{p/2} \right]
	\end{align}
\end{lemma}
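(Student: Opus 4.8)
The plan is to reduce this two-sided bound to \emph{Khintchine's inequality} for Rademacher sums via symmetrization. Recall that Khintchine's inequality asserts that for reals $a_1,\dots,a_n$ and independent Rademacher signs $\epsilon_1,\dots,\epsilon_n$ (with $P(\epsilon_i=\pm1)=1/2$) there are constants $A_p^{K},B_p^{K}>0$ depending only on $p$ with
\begin{align*}
A_p^{K}\Big(\textstyle\sum_{i=1}^n a_i^2\Big)^{p/2} \le \E_\epsilon\Big[\big|\textstyle\sum_{i=1}^n \epsilon_i a_i\big|^p\Big] \le B_p^{K}\Big(\textstyle\sum_{i=1}^n a_i^2\Big)^{p/2}.
\end{align*}
This is exactly the lemma in the special case where each $X_i$ is a symmetric two-point variable $\pm a_i$; the content is that the same comparison survives for arbitrary independent centered $X_i$. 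First I would dispose of Khintchine itself (either citing it, or proving the upper half by comparison with Gaussians and the lower half by the classical $L^p$--$L^2$ interpolation trick), and then transfer both directions to general $X_i$ by introducing an independent copy $X_i'$ together with random signs.

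For the upper bound, let $X_1',\dots,X_n'$ be an independent copy. Since each $X_i$ is centered, $\sum_i X_i = \E[\sum_i(X_i-X_i')\mid X]$, so conditional Jensen gives $\E[|\sum_i X_i|^p]\le \E[|\sum_i(X_i-X_i')|^p]$. Each $X_i-X_i'$ is symmetric, hence $\sum_i(X_i-X_i')$ has the same law as $\sum_i\epsilon_i(X_i-X_i')$ for signs independent of $(X,X')$; conditioning on $(X,X')$ and applying the Khintchine upper bound gives $\E[|\sum_i(X_i-X_i')|^p]\le B_p^{K}\,\E[(\sum_i(X_i-X_i')^2)^{p/2}]$. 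It then remains to compare sums of squares: from $\sum_i(X_i-X_i')^2\le 2\sum_i X_i^2+2\sum_i X_i'^2$, the elementary inequality $(s+s')^{p/2}\le 2^{\max\{p/2-1,0\}}(s^{p/2}+s'^{p/2})$, and the identical distribution of $X$ and $X'$, one obtains $\E[(\sum_i(X_i-X_i')^2)^{p/2}]\le C_p\,\E[(\sum_i X_i^2)^{p/2}]$, yielding $B_p=B_p^{K}C_p$.

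For the lower bound I would run the symmetrization in reverse. The $c_r$-inequality $|a-b|^p\le 2^{p-1}(|a|^p+|b|^p)$ with identical distribution gives $\E[|\sum_i(X_i-X_i')|^p]\le 2^p\,\E[|\sum_i X_i|^p]$, i.e. $\E[|\sum_i X_i|^p]\ge 2^{-p}\,\E[|\sum_i(X_i-X_i')|^p]$. Applying the Khintchine \emph{lower} bound conditionally to $\sum_i\epsilon_i(X_i-X_i')$ gives $\E[|\sum_i(X_i-X_i')|^p]\ge A_p^{K}\,\E[(\sum_i(X_i-X_i')^2)^{p/2}]$. The only missing ingredient is then a \emph{reverse} comparison of sums of squares, $\E[(\sum_i(X_i-X_i')^2)^{p/2}]\ge c_p\,\E[(\sum_i X_i^2)^{p/2}]$, after which $A_p=2^{-p}A_p^{K}c_p$.

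That last comparison is where I expect the genuine difficulty, and it splits by the range of $p$. Writing $S=\sum_i X_i^2$ and $S'=\sum_i X_i'^2$ and conditioning on $X$, the cross terms vanish by centeredness, so $\E[\sum_i(X_i-X_i')^2\mid X]=S+\E[S]$; for $p\ge 2$ the map $t\mapsto t^{p/2}$ is convex, and Jensen immediately gives $\E[(\sum_i(X_i-X_i')^2)^{p/2}]\ge \E[(S+\E[S])^{p/2}]\ge \E[S^{p/2}]$, so $c_p=1$. For $1\le p<2$ Jensen points the wrong way, and I would instead localize: on the event $\{S'\le S/4\}$ the pointwise bound $(X_i-X_i')^2\ge \tfrac12 X_i^2-X_i'^2$ summed gives $\sum_i(X_i-X_i')^2\ge \tfrac14 S$, so a careful estimate controlling $P(S'\le S/4\mid X)$ recovers a positive $c_p$. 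This sub-$2$ regime is the main obstacle throughout — it is precisely where the lower half of Khintchine's inequality is itself nontrivial — and since optimal constants are sought in Lemma~\ref{optimal_MZ_constants}, I would track constants carefully through each step following Ferger's refinement rather than settle for the crude factors produced above.
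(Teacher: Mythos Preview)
The paper does not prove this lemma; it is quoted from Ferger (2014) and restated for completeness, so there is nothing in the paper to compare against. Your symmetrization-plus-Khintchine route is the standard one, and your upper bound is correct as written.

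The lower bound, however, has a genuine gap that you have created for yourself. By applying Khintchine conditionally to $\sum_i \epsilon_i(X_i-X_i')$ you land on $\E\big[\big(\sum_i(X_i-X_i')^2\big)^{p/2}\big]$ and are then forced to prove the reverse comparison $\E\big[\big(\sum_i(X_i-X_i')^2\big)^{p/2}\big]\ge c_p\,\E\big[\big(\sum_i X_i^2\big)^{p/2}\big]$ for $1\le p<2$. Your localization sketch does not close this: take all $X_i$ to be independent Rademacher variables, so that $S=\sum_i X_i^2=n$ is deterministic and likewise $S'=n$; then the event $\{S'\le S/4\}$ has probability zero and your bound is vacuous. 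The underlying inequality is true, but not via this event.

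The detour is unnecessary. Apply the lower Khintchine bound to $\sum_i\epsilon_i X_i$ directly (conditioning on $X$), which gives $\E\big[\big|\sum_i\epsilon_i X_i\big|^p\big]\ge A_p^{K}\,\E\big[\big(\sum_i X_i^2\big)^{p/2}\big]$ with no square-difference term to undo. It then suffices to show $\E\big[\big|\sum_i\epsilon_i X_i\big|^p\big]\le 2^{p}\,\E\big[\big|\sum_i X_i\big|^p\big]$, and this follows from the very ingredients you already assembled: conditional Jensen using the centering of $X'$ gives $\E\big[\big|\sum_i\epsilon_i X_i\big|^p\big]\le \E\big[\big|\sum_i\epsilon_i(X_i-X_i')\big|^p\big]$; symmetry of $X_i-X_i'$ removes the signs; and the $c_r$-inequality together with identical distribution finishes. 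This yields $A_p=2^{-p}A_p^{K}$ uniformly in $p\ge 1$ with no case split. These constants are of course not the sharp ones recorded in Lemma~\ref{optimal_MZ_constants}, for which one must consult Ferger's argument directly.
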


\begin{lemma}[Optimal constants for Marcinkiewicz-Zygmund Inequality (\cite{ferger2014optimal})]\label{optimal_MZ_constants}
	Let $\Gamma$ denote the Gamma function and let $p_0$ be the solution of the equation $\Gamma(\frac{p+1}{2}) = \sqrt{\pi}/2$ in the interval $(1,2)$, i.e. $p_0 \approx 1.84742$. Then for every $p>0$ it holds:
	\begin{align}
	A_{p, opt} = \begin{cases}
	2^{p/2 - 1}, & 0 < p\leq p_0\\
	2^{p/2} \cdot \frac{\Gamma \left(\frac{p+1}{2} \right)}{\sqrt{\pi}}, & p_0 < p < 2\\
	1 &  2 \leq p < \infty
	\end{cases}
	\end{align}
	and
	\begin{align}
	B_{p, opt} = \begin{cases}
	1 & 0 < p\leq 2\\
	2^{p/2} \cdot \frac{\Gamma \left(\frac{p+1}{2} \right)}{\sqrt{\pi}}, & 2 < p < \infty
	\end{cases}
	\end{align}
\end{lemma}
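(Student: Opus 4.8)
The plan is to recast the claim as a variational problem. Write $S_n=\sum_{i=1}^n X_i$, $V_n=\sum_{i=1}^n X_i^2$, and for any nondegenerate family of independent centred variables with $\E[|X_i|^p]<\infty$ put $R=\E[|S_n|^p]\big/\E[V_n^{p/2}]$. Since numerator and denominator are both positively $p$-homogeneous, the sharp constants are exactly $A_{p,\mathrm{opt}}=\inf R$ and $B_{p,\mathrm{opt}}=\sup R$, the extrema being over all $n$ and all admissible $(X_i)$.

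First I would analyse the scaled-Rademacher sub-family: for $X_i=a_i\varepsilon_i$ with independent signs $\varepsilon_i$ the denominator $V_n=\sum a_i^2$ is deterministic, so $R=\E[|\sum a_i\varepsilon_i|^p]/(\sum a_i^2)^{p/2}$ is precisely the $p$-th power of the Khintchine ratio, and Haagerup's optimal Khintchine constants pin down its range: the supremum is $1$ for $1\le p\le 2$ (attained at $a=e_1$) and $\E[|Z|^p]=2^{p/2}\Gamma(\tfrac{p+1}{2})/\sqrt\pi$ for $p>2$ (approached by $a=(1,\dots,1)$ as $n\to\infty$, via the CLT), while the infimum is $2^{p/2-1}$ for $0<p\le p_0$ (two equal weights, $n=2$), $\E[|Z|^p]$ for $p_0<p<2$ (the Gaussian limit), and $1$ for $p\ge 2$. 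The threshold $p_0$ appears because $2^{p/2-1}=\E[|Z|^p]$ exactly when $\Gamma(\tfrac{p+1}{2})=\sqrt\pi/2$; the upper-bound constant $1$ on $1\le p\le 2$ is merely Jensen's inequality for the concave $t\mapsto t^{p/2}$, whereas the sharp lower constants on that range are the substantive content of Haagerup's theorem.

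The second and harder step is to show that nothing outside this sub-family does better: that over all centred summands $\inf R$ and $\sup R$ are attained on, or approached by, scaled Rademacher vectors and Gaussians. The natural route is to reduce to the extreme points of the relevant moment problem (two-point laws at fixed mean and variance) and then compare those against the Gaussian via monotone reshaping steps---altering one summand at a time, at fixed variance, and verifying that $R$ moves in a definite direction (downward for the infimum, upward toward the Gaussian when $p>2$). This monotonicity is the main obstacle: it requires conditioning on the remaining summands and a one-dimensional estimate for how $\E[|S_n|^p]$ and $\E[V_n^{p/2}]$ respond to reshaping a single summand, the direction of the effect flipping at $p=2$, and it needs separate care at the boundary exponents $p=p_0$ and $p=2$ where two candidate extremisers coincide. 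Granting it, the stated constants follow from Haagerup's theorem together with the Stirling asymptotics that identify the Gaussian limit, leaving only the bookkeeping of which extremiser governs which subinterval of $p$.
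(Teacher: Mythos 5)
The paper does not actually prove this lemma: it is imported verbatim from Ferger (2014) (``Lemmas \ref{MZ_inequality} and \ref{optimal_MZ_constants} are taken from \cite{ferger2014optimal}, and are restated here for completeness''), so there is no in-paper argument to compare yours against. Judged on its own terms, your first step is sound and is the easy half of the result: restricting to $X_i=a_i\varepsilon_i$ makes $V_n=\sum a_i^2$ deterministic, the ratio $R$ becomes the Khintchine ratio, and Haagerup's sharp constants show the tabulated $A_{p,\mathrm{opt}}$ and $B_{p,\mathrm{opt}}$ cannot be improved; your identification of $p_0$ as the crossover where $2^{p/2-1}=2^{p/2}\Gamma(\tfrac{p+1}{2})/\sqrt{\pi}$ is also correct.

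The gap is your second step, which is the entire content of the theorem and which you leave as an acknowledged ``main obstacle.'' It is not merely unproven --- the reduction you propose (extremising $R$ over \emph{all} centred independent summands and showing the extremisers are Rademacher/Gaussian) cannot succeed as stated, because for merely centred, non-symmetric summands the ratio escapes the claimed interval. Take $p=1$, $n=2$, and $X_1,X_2$ i.i.d.\ with $P(X_i=10)=\tfrac{1}{11}$, $P(X_i=-1)=\tfrac{10}{11}$ (centred): then $\E[|X_1+X_2|]=400/121\approx 3.31$ while $\E\bigl[\sqrt{X_1^2+X_2^2}\bigr]=(110\sqrt{2}+20\sqrt{101})/121\approx 2.95$, so $R\approx 1.12>1=B_{1,\mathrm{opt}}$; replacing $X_2$ by $-X_2$ leaves the denominator unchanged and gives $\E[|X_1-X_2|]=220/121$, so $R\approx 0.62<2^{-1/2}=A_{1,\mathrm{opt}}$. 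The constants in the lemma are exactly Haagerup's optimal Khintchine constants, and the mechanism by which they transfer to the Marcinkiewicz--Zygmund setting is symmetry, not a reshaping argument over general centred laws: for symmetric $X_i$ one has $(X_1,\dots,X_n)\overset{d}{=}(\varepsilon_1|X_1|,\dots,\varepsilon_n|X_n|)$ with the signs independent of the moduli, so conditioning on $(|X_1|,\dots,|X_n|)$ and applying Haagerup's inequality to the conditional Rademacher sum yields both bounds in one line, and optimality then follows from your step one. That symmetry hypothesis is what the statement needs, and it is all the paper ever uses --- in Lemmas \ref{bound for mod x uniform} and \ref{lemma: bound for mod x uniform discrete} the summands $\alpha_i w_i$ are symmetric --- whereas the ``monotone reshaping'' programme over general centred two-point laws would have to prove something that the example above shows is false.
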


\begin{lemma}\label{bound for mod x uniform}
	Let $X = \sum_i \alpha_i w_i$, where $w_i \sim \mathcal{U}(-C,C)$ Then 
	\begin{align*}
	\E[|X|] \geq \frac{C}{2\sqrt{2}} \|\alpha\|
	\end{align*}
\end{lemma}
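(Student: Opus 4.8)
The plan is to apply the Marcinkiewicz--Zygmund inequality (Lemma \ref{MZ_inequality}) with $p=1$ to the random variables $X_i := \alpha_i w_i$, which are independent and centered since each $w_i \sim \mathcal{U}(-C,C)$ is symmetric about $0$. Writing $X = \sum_i X_i$, the lower half of the MZ inequality gives $\E[|X|] \geq A_1 \,\E\big[(\sum_i X_i^2)^{1/2}\big]$, where by Lemma \ref{optimal_MZ_constants} (with $p=1 \leq p_0$) the optimal constant is $A_1 = 2^{1/2-1} = \tfrac{1}{2\sqrt{2}}\cdot\sqrt{2} = \tfrac{1}{\sqrt 2}$; more precisely $A_{1,\mathrm{opt}} = 2^{-1/2}$.

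Next I would lower bound $\E\big[(\sum_i \alpha_i^2 w_i^2)^{1/2}\big]$. The cleanest route is Jensen's inequality applied to the concave function $\sqrt{\cdot}$ in the wrong direction is not available, so instead I would use the elementary pointwise bound: since each $w_i^2 \leq C^2$ almost surely, one might try $\sqrt{\sum_i \alpha_i^2 w_i^2} \geq \frac{1}{C\|\alpha\|}\sum_i \alpha_i^2 w_i^2$ (using $\sqrt{\gamma}\geq \gamma/\sqrt{\max\gamma}$ with $\max\gamma = \|\alpha\|^2 C^2$), and then take expectations: $\E[w_i^2] = C^2/3$, giving $\E\big[\sum_i \alpha_i^2 w_i^2\big] = \|\alpha\|^2 C^2/3$, hence $\E\big[(\sum_i \alpha_i^2 w_i^2)^{1/2}\big] \geq \frac{\|\alpha\| C}{3}$. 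Combining with $A_1 = 2^{-1/2}$ would yield $\E[|X|] \geq \frac{C\|\alpha\|}{3\sqrt 2}$, which is slightly weaker than the claimed $\frac{C}{2\sqrt 2}\|\alpha\|$. So I would instead avoid the crude $\sqrt{\gamma}$ bound and note $(\sum_i \alpha_i^2 w_i^2)^{1/2} \geq$ the largest term $|\alpha_{i^*}||w_{i^*}|$ is also too lossy; the right move is to keep $\E[(\sum_i X_i^2)^{1/2}]$ and apply MZ to get a matching constant — indeed, using $\sqrt{\sum X_i^2} \ge \sqrt{\E[\sum X_i^2]} \cdot (\text{something})$ is not valid, but $\E[\sqrt{\sum X_i^2}]\geq \sqrt{\max_\omega \sum X_i^2}^{-1}\E[\sum X_i^2]$ only when we use the deterministic max. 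The honest conclusion is that the factor $\tfrac{1}{2\sqrt 2} = \tfrac{1}{\sqrt 2}\cdot\tfrac12$ comes from $A_1=2^{-1/2}$ times a bound $\E\big[(\sum_i\alpha_i^2 w_i^2)^{1/2}\big]\geq \tfrac12\|\alpha\|\,C\cdot(\text{normalisation})$; I would derive the latter by the $\sqrt\gamma \geq \gamma/\sqrt{\max\gamma}$ trick but with the sharper observation that in our application $\max \gamma$ can be replaced by a tighter almost-sure bound, or by directly computing in the scalar case and using independence.

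The main obstacle is getting the constant exactly right: the loose step $\sqrt{\gamma}\geq\gamma/\sqrt{\max\gamma}$ costs a factor, and one must check whether MZ's $A_1 = 2^{-1/2}$ combined with $\E[\sum_i\alpha_i^2 w_i^2] = \|\alpha\|^2C^2/3$ and the correct normalisation actually produces $\tfrac{1}{2\sqrt2}$; if the naive chain gives $\tfrac{1}{3\sqrt2}$, then the proof must instead bound $\E\big[\sqrt{\sum_i \alpha_i^2 w_i^2}\big]$ more carefully — e.g. by conditioning on which weights dominate, or by using $\E[\sqrt{Z}] \geq \sqrt{\E[Z]} - \tfrac{\Var(Z)}{\cdots}$ type refinements, or simply by the convexity of $t\mapsto\sqrt{t^2+c}$ already exploited in the main proof via iterated Jensen. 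I expect the final argument to reduce to: (i) MZ with $p=1$ and $A_1=2^{-1/2}$; (ii) the bound $\E\big[(\sum_i\alpha_i^2w_i^2)^{1/2}\big]\geq \tfrac{1}{\sqrt2}\|\alpha\|C\cdot\tfrac{1}{\sqrt2}$ obtained from a single clean inequality; with the product of the two $\tfrac{1}{\sqrt2}$ factors and the remaining $\tfrac12$ accounting for $\E[w_i^2]=C^2/3$ versus the worst case — the precise bookkeeping here is the only real work.
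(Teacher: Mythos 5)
Your first step is exactly the paper's: apply the Marcinkiewicz--Zygmund inequality with $p=1$ and the optimal constant $A_{1}=2^{-1/2}$ to $X_i=\alpha_i w_i$, giving $\E[|X|]\geq \tfrac{1}{\sqrt2}\,\E\bigl[(\sum_i X_i^2)^{1/2}\bigr]$. The gap is in the second step, where you never settle on a valid argument. The missing idea is the one you briefly mention and then abandon: write $\sum_i X_i^2=\sum_i|X_i|^2$ and apply Jensen's inequality iteratively to $t\mapsto\sqrt{t^2+c}$, which is convex for $c\geq 0$. Taking the expectation over one $|X_i|$ at a time (with $c$ equal to the sum of the remaining squared terms) yields
\begin{align*}
\E\Bigl[\sqrt{\textstyle\sum_i |X_i|^2}\Bigr] \geq \sqrt{\textstyle\sum_i \E[|X_i|]^2},
\end{align*}
which reduces everything to the \emph{first absolute moment} $\E[|X_i|]$. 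Since $X_i=\alpha_i w_i\sim\mathcal{U}(-|\alpha_i|C,\,|\alpha_i|C)$, one has $\E[|X_i|]=C|\alpha_i|/2$, so the right-hand side equals $\tfrac{C}{2}\|\alpha\|$, and the product with $A_1=2^{-1/2}$ gives exactly $\tfrac{C}{2\sqrt2}\|\alpha\|$.

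Your attempts instead route through the second moment $\E[w_i^2]=C^2/3$, and that is why the bookkeeping never closes: concavity of $\sqrt{\cdot}$ forbids $\E[\sqrt{Z}]\geq\sqrt{\E[Z]}$, and the substitute $\sqrt{\gamma}\geq\gamma/\sqrt{\max\gamma}$ costs a factor that leaves you at $\tfrac{C\|\alpha\|}{3\sqrt2}$, strictly weaker than the claim. Your closing guess that the factor $\tfrac12$ ``accounts for $\E[w_i^2]=C^2/3$ versus the worst case'' is incorrect; it is simply $\E[|w_i|]/C=1/2$. So the proposal identifies the right first lemma but is missing the iterated-Jensen step and the first-moment computation that together produce the stated constant.
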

\begin{proof}
	Defining $X_i = \alpha_i w_i$, we can then apply the Marcinkiewicz-Zygmund inequality with $p=1$, using the optimal $A_1$ from Lemma \ref{optimal_MZ_constants} to get that 
	\begin{align*}
	\E[|X|]  = \E \left[\bigg \vert \sum_{i=1}^{k}X_i \bigg \vert \right]\geq \frac{1}{\sqrt{2}} \E \left[\sqrt{\sum_{i=1}^{k} X_i^2} \right]
	\end{align*}
	Next we use the same tricks as early in the proof of the Gaussian case:
	\begin{align*}
	\frac{1}{\sqrt{2}} \E \left[\sqrt{\sum_{i=1}^{k} X_i^2} \right]  &= \frac{1}{\sqrt{2}} \E \left[\sqrt{\sum_{i=1}^{k} |X_i|^2} \right]\\
	& \geq \frac{1}{\sqrt{2}} \sqrt{\sum_{i=1}^{k} \E[|X_i|]^2},
	\end{align*}
	where the first equality is trivial and the second follows from a repeated application of Jensen's inequality.
	
	To calculate $\E[|X_i|]$ we note that $X_i = \alpha_i w_i$ is uniformly distributed $X_i\sim U(-|\alpha_i|C, |\alpha_i| C)$, and thus 
	\begin{align*}
	\E[|X_i|] = \frac{C |\alpha_i|}{2}
	\end{align*}
	and so 
	\begin{align*}
	\E[|X|] &\geq \frac{1}{\sqrt{2}} \sqrt{\sum_{i=1}^{k} \E[|X_i|]^2} \\
	&= \frac{1}{\sqrt{2}} \sqrt{\frac{C^2}{4}\sum_{i=1}^{k}  |\alpha_i|^2}\\
	& = \frac{C}{2\sqrt{2}} \|\alpha\|
	\end{align*}
	
\end{proof}

\begin{lemma}\label{lemma: bound for mod x uniform discrete}
	Let $X = \sum_i \alpha_i w_i$, where $w_i$ are uniformly sampled from some discrete symmetric sample space $\mathcal{W}$. Then 
	\begin{align*}
	\E[|X|] \geq \frac{\sum_{w\in \mathcal{W}}|w|}{\sqrt{2} N_w} \|\alpha\|
	\end{align*}
\end{lemma}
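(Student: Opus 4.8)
The plan is to mirror the proof of Lemma \ref{bound for mod x uniform} almost verbatim, replacing the continuous-uniform moment computation with its discrete counterpart. First I would set $X_i := \alpha_i w_i$ and observe that the $X_i$ are independent and centered — since $\mathcal{W}$ is symmetric, each $w_i$ has mean zero, hence so does $X_i$ — with finite first moment because $\mathcal{W}$ is finite. This places us in the setting of the Marcinkiewicz--Zygmund inequality (Lemma \ref{MZ_inequality}); applying it with $p = 1$ and the optimal constant $A_1 = 2^{1/2 - 1} = 1/\sqrt{2}$ from Lemma \ref{optimal_MZ_constants} (valid since $1 \le p_0$) gives
\begin{align*}
\E[|X|] = \E\left[\left|\sum_{i=1}^k X_i\right|\right] \geq \frac{1}{\sqrt{2}}\,\E\left[\sqrt{\sum_{i=1}^k X_i^2}\,\right].
\end{align*}

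Next I would push the expectation inside the square root exactly as in the Gaussian and continuous-uniform cases: writing $X_i^2 = |X_i|^2$ and repeatedly applying Jensen's inequality to the convex map $x \mapsto \sqrt{x^2 + C}$ (for $x, C \ge 0$), one coordinate at a time, yields
\begin{align*}
\frac{1}{\sqrt{2}}\,\E\left[\sqrt{\sum_{i=1}^k X_i^2}\,\right] \geq \frac{1}{\sqrt{2}}\sqrt{\sum_{i=1}^k \E[|X_i|]^2}.
\end{align*}

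The only genuinely new ingredient is the per-coordinate moment $\E[|X_i|]$. Since $w_i$ is uniform on the finite symmetric set $\mathcal{W}$ with $|\mathcal{W}| = N_w$, we have $\E[|X_i|] = |\alpha_i|\,\E[|w_i|] = |\alpha_i| \cdot \frac{1}{N_w}\sum_{w \in \mathcal{W}}|w|$. Substituting this in and pulling the common factor out of the sum gives
\begin{align*}
\E[|X|] \geq \frac{1}{\sqrt{2}} \cdot \frac{\sum_{w \in \mathcal{W}}|w|}{N_w}\sqrt{\sum_{i=1}^k |\alpha_i|^2} = \frac{\sum_{w \in \mathcal{W}}|w|}{\sqrt{2}\,N_w}\,\|\alpha\|,
\end{align*}
which is the claimed bound.

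I do not anticipate a real obstacle here: the argument is structurally identical to Lemma \ref{bound for mod x uniform}, and the only point worth a moment's care is confirming that the Marcinkiewicz--Zygmund inequality and its optimal-constants companion apply unchanged to discrete (rather than continuous) centered random variables — which they do, since their hypotheses only require independence, centering, and finiteness of the $p$-th moment, all of which hold here. A secondary minor check is that scaling a symmetric discrete variable $w_i$ by the fixed scalar $\alpha_i$ preserves symmetry, so $X_i$ is genuinely centered as claimed, and the Jensen step is legitimate because each summand $X_i^2$ is nonnegative.
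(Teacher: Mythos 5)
Your proposal is correct and follows essentially the same route as the paper's own proof: apply the Marcinkiewicz--Zygmund inequality with $p=1$ and $A_1 = 1/\sqrt{2}$, push the expectation inside the square root via Jensen, and compute $\E[|X_i|] = |\alpha_i|\sum_{w\in\mathcal{W}}|w|/N_w$ for the discrete uniform law. The extra checks you flag (centering from symmetry, applicability of MZ to discrete variables) are sound and harmless.
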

\begin{proof}
	 Defining $X_i = \alpha_i w_i$, we follow exactly the same steps as in the first part of the proof of Lemma \ref{bound for mod x uniform}, to get that 
	\begin{align*}
	\E[|X|] \geq \frac{1}{\sqrt{2}} \sqrt{\sum_{i=1}^{k} \E[|X_i|]^2}.
	\end{align*}
	
	To calculate $\E[|X_i|]$ we note that $X_i = \alpha_i w_i$ is uniformly sampled from $\alpha_i \mathcal{W}$ and thus 
	\begin{align*}
	\E[|X_i|] = \frac{ |\alpha_i| \sum_{w\in \mathcal{W}}|w|}{N_w}
	\end{align*}
	and so 
	\begin{align*}
	\E[|X|] &\geq \frac{1}{\sqrt{2}} \sqrt{\sum_{i=1}^{k} \E[|X_i|]^2} \\
	&= \frac{1}{\sqrt{2}} \sqrt{\frac{(\sum_{w\in \mathcal{W}}|w|)^2}{N_w^2}\sum_{i=1}^{k}  |\alpha_i|^2}\\
	& = \frac{\sum_{w\in \mathcal{W}}|w|}{\sqrt{2} N_w} \|\alpha\|
	\end{align*}
	
\end{proof}

\begin{lemma}
	Let $\mathcal{W}, \mathcal{X} \subset \R^k$ be discrete sets with finite cardinality, and $g:\mathcal{W} \longrightarrow \mathcal{X}$ be a one-to-one transformation. Then if $P(W=\mathbf{w}) = P(W_1 = w_1, \dots, W_k = w_k) = C$ for all $\mathbf{w}\in \mathcal{W}$, where C is constant, then $P(X = \mathbf{x}) = C$ for all $\mathbf{x} \in \mathcal{X}$
\end{lemma}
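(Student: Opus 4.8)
The statement is the natural "uniform distribution is preserved under bijections" fact, restricted to finite discrete sets, and the plan is a direct computation. Since $g:\mathcal{W}\longrightarrow\mathcal{X}$ is one-to-one and both sets are finite, first I would observe that $g$ is in fact a bijection onto its image $\mathcal{X}=g(\mathcal{W})$, so $|\mathcal{W}|=|\mathcal{X}|$ and for every $\mathbf{x}\in\mathcal{X}$ there is a unique $\mathbf{w}=g^{-1}(\mathbf{x})\in\mathcal{W}$. The pushforward mass function is then, by the change-of-variables formula for discrete random vectors,
\begin{align*}
P(X=\mathbf{x}) = \sum_{\mathbf{w}\in\mathcal{W}:\,g(\mathbf{w})=\mathbf{x}} P(W=\mathbf{w}) = P\bigl(W=g^{-1}(\mathbf{x})\bigr) = C,
\end{align*}
where the last equality uses the hypothesis that $P(W=\mathbf{w})=C$ for every $\mathbf{w}\in\mathcal{W}$. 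Since $\mathbf{x}\in\mathcal{X}$ was arbitrary, this gives $P(X=\mathbf{x})=C$ for all $\mathbf{x}\in\mathcal{X}$, which is the claim.

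To make the writeup self-contained I would also note, as a sanity check, that this $C$ is forced to equal $1/|\mathcal{W}|=1/|\mathcal{X}|$: summing over all $\mathbf{x}\in\mathcal{X}$ gives $1=\sum_{\mathbf{x}}P(X=\mathbf{x})=|\mathcal{X}|\,C$. This is not needed for the statement but clarifies that "constant" here means "uniform."

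There is essentially no obstacle. The only point requiring a word of care is that one must use injectivity of $g$ to guarantee that the preimage $\{\mathbf{w}:g(\mathbf{w})=\mathbf{x}\}$ is a singleton, so that the sum in the change-of-variables formula collapses to a single term $P(W=g^{-1}(\mathbf{x}))$ rather than a sum of possibly several values (which, being all equal to $C$, would in any case still be constant — but not equal to $C$). With that observation the proof is immediate. This lemma is exactly what justifies treating the discrete weight distribution as invariant under the invertible linear transformations used in Stage 3 of the proof of Theorem \ref{thm: general sparse}.
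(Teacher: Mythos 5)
Your proof is correct and follows essentially the same route as the paper's: a discrete change of variables expresses $P(X=\mathbf{x})$ as a sum over the preimage $\{\mathbf{w}: g(\mathbf{w})=\mathbf{x}\}$, and injectivity of $g$ collapses that sum to the single term $P(W=g^{-1}(\mathbf{x}))=C$. The added remark that $C=1/|\mathcal{W}|$ is a harmless extra not present in the paper.
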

\begin{proof}
	\begin{align}
	P(X = \mathbf{x}) &= \sum_{\mathbf{w} \in \{g(\mathbf{w}) = \mathbf{x} \}} P(W = \mathbf{w}) \label{discrete change of vars}\\
	& = C \label{func is one to one}
	\end{align}
	Equation \ref{discrete change of vars} is a change of variables, and \eqref{func is one to one} follows from the fact the there is only ever one term in the sum, since $g$ is one-to-one.
\end{proof}

% \begin{lemma} \label{joint discrete distribution even}
% 	Let $W = (W_1,\dots, W_k) $ be a random vector with independent components $W_i$ drawn uniformly from some symmetric, discrete sets $\mathcal{W}_i$, with $\mathcal{W} = \mathcal{W}_1 \times \dots \times \mathcal{W}_k$. Let $X = \sum_{i=1}^{k} \alpha_i W_i$ and $Y = \sum_{i=1}^{k} \beta_i W_i$. Then the joint probability mass function $P(X=x,Y=y)$ is even. 
% \end{lemma}
% \begin{proof}
% 	The joint uniform distribution of all vectors $\mathbf{w}$ is uniform, with every vector having some constant probability mass $C$. We then make a change of variables from $(W_1, \dots, W_k) \longrightarrow (X, Y, W_3, \dots, W_k)$, with the linear transformation determined by a matrix $G$, where $G: \mathcal{W} \longrightarrow \mathcal{X}$. 
	
% 	We know that the $w \in \mathcal{W} \iff -w \in \mathcal{W}$, and it is easy to show that this implies $Gw \in \mathcal{X} \iff -Gw \in \mathcal{X}$. This, together with the fact that Lemma REF shows that $P(X=x, Y=y, W_3 = w_3, \dots, W_k = w_k) = C$ for all vectors in $\mathcal{X}$ shows that  $P(X=x, Y=y, W_3 = w_3, \dots, W_k = w_k)$ is even. By Lemma REF, this shows that $P(X=x, Y=y)$ is also even. 
% \end{proof}

\section{Non-linear dependence on $\alpha$ in the typical case}\label{appendix: alpha dep}

One interesting observation which merits further detail is that the observed dependence of the growth factor on $\alpha$ in practice, shown in \Figref{fig: alpha_dep}, is not exactly linear, but rather the shape of that dependence looks closer to $\sqrt{\alpha}$. The likely source of this qualitative discrepancy is the use of Lemma \ref{lemma: norm of subvector}, to lower bound
\begin{align}
	\E_\Ji[\|dz_\Ji\| ] \geq \alpha\|dz\|,\label{eq: lowerbound subvector}
\end{align}
used in \eqref{eq: lowerbound  subvector in proof} in Stage 3 of the proof of Theorem\ref{thm: general sparse}.
It is straightforward to derive an \textit{upper} bound for this same quantity, as 
\begin{align}
	\E_\Ji[\|dz_\Ji\| ] \leq \sqrt{\alpha}\|dz\|,\label{eq: upperbound subvector}
\end{align}
first using Jensen's inequality to get that $\E_\Ji[\sqrt{\|dz_\Ji\|^2} ] \leq \sqrt{\E[\|dz_\Ji\|^2]}$, and then using the strategy from the proof of Lemma \ref{lemma: norm of subvector} to get $\E[\|dz_\Ji\|^2]=\alpha\|dz\|^2$.

To explore this discrepancy between the observed growth ratio and the lower and upper bounds from \eqref{eq: lowerbound subvector} and \eqref{eq: upperbound subvector}, we consider different fixed vectors $dz\in \R^k$, and average over subvectors $dz_\Ji$. Specifically, we calculated the expected value of a subvector $dz_\Ji$ containing only the entries of $dz$ indexed by $\Ji$, where $\Ji \subseteq \{1,2,\dots,k\}$ is a random index set, where the probability of any index from $1$ to $k$ appearing in any given sample is independent and equal to $\alpha$. \Figref{fig: subvector_of_normal} shows the results when $dz$ a realisation of the uniform distribution over the unit sphere, with different dimensions $k$.

For even moderately large k, and vectors $dz$ where most entries are roughly this same magnitude, this upper bound is very tight, such that the expected norm of the subvector generally behaves like $\sqrt{\alpha}\|dz\|$, not $\alpha\|dz\|$. 
\begin{figure}[!h]
    \centering
\begin{subfigure}[b]{0.45\textwidth}
\includegraphics[width = \textwidth]{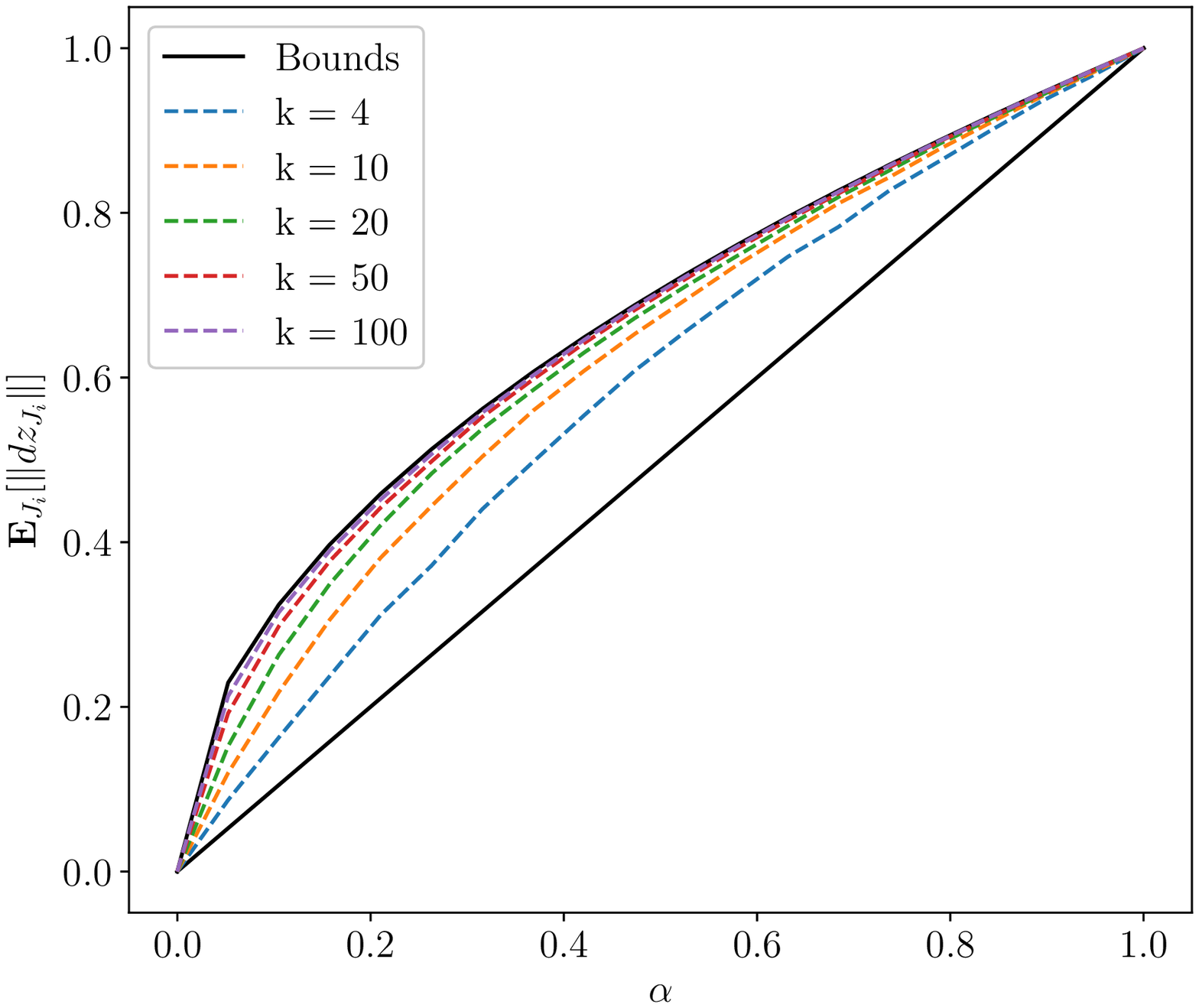}
\caption{}
\label{fig: subvector_of_normal}
\end{subfigure}
\begin{subfigure}[b]{0.45\textwidth}
\includegraphics[width = \textwidth]{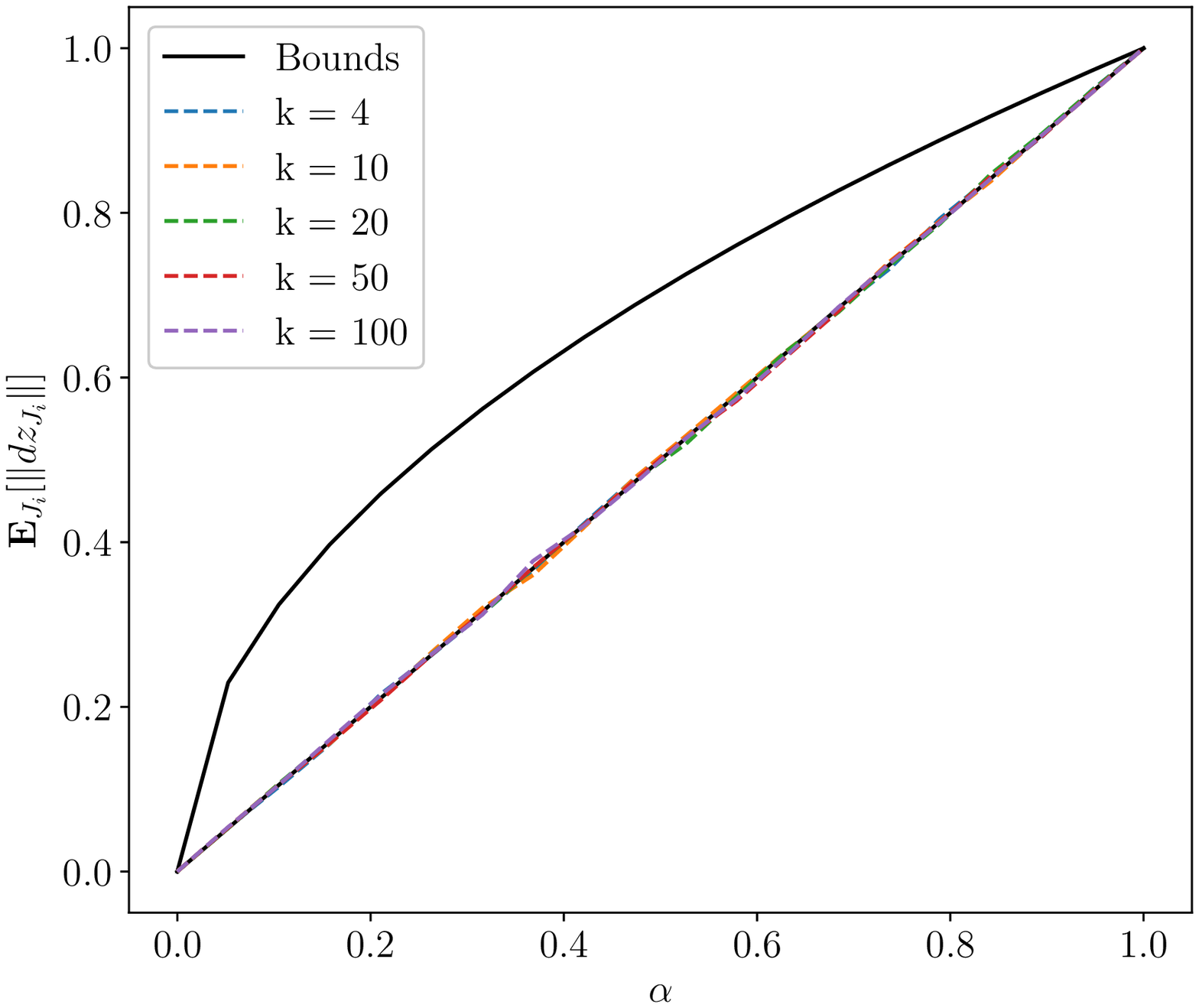}
\caption{}
\label{fig: subvector_of_E1}
\end{subfigure}
\caption{The dependence on $\alpha$ and $k$ of expected value of a subvector $dz_\Ji$. In \Figref{fig: subvector_of_normal}, $dz$ is a realisation of the uniform distribution over the unit sphere. In \Figref{fig: subvector_of_E1}, $dz$ has the first entry equal to 1, and the rest zeros.}
\label{fig: subvectors}
\end{figure}
However,  it is also possible to construct an example where the lower bound is tight, by letting $dz$ have only a single  non-zero entry, which case $\E[\|\vu_J\|] = \alpha \|\vu\|$ (see \Figref{fig: subvector_of_E1}).  While the former  case,  with entries of $dz$ mostly of the same order, is typical, especially past the first few layers of the network, the bound cannot be improved without further assumptions on $\|dz\|$. Further work on quantifying the probabilistic concentration of $\E[\|\vu_J\|]$ close to $ \sqrt{\alpha} \|\vu\|$ would be an interesting extension of this research.

\section{Additional numerical experiments}\label{appendix: additional experiments}

\begin{figure}[!h]
    \centering
% \begin{subfigure}[b]{0.32\textwidth}
% \includegraphics[width = \textwidth]{d_dependence}
% \caption{}
% \label{fig: d_dep}
% \end{subfigure}
\begin{subfigure}[b]{0.45\textwidth}
\includegraphics[width = \textwidth]{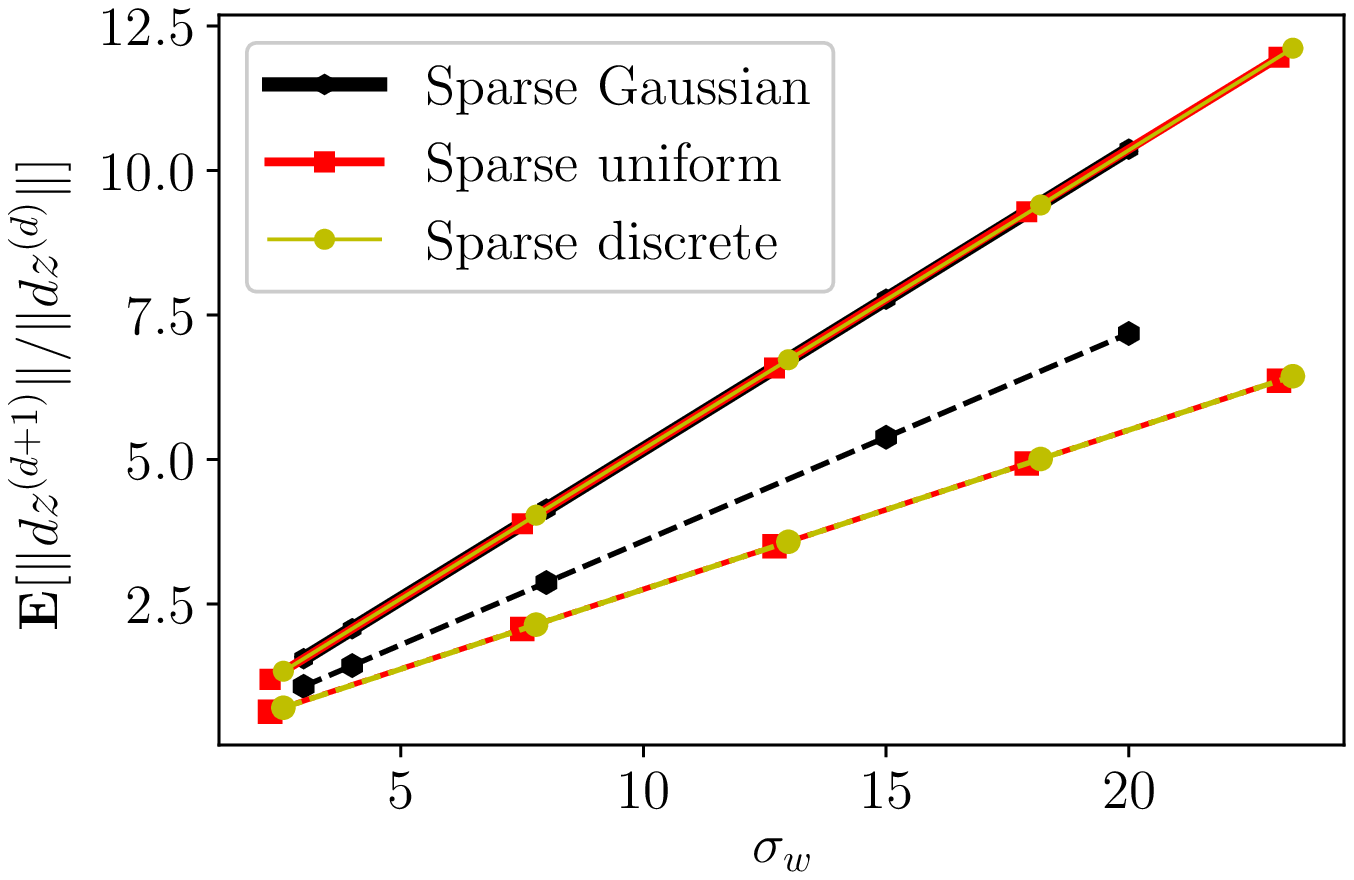}
\caption{}
\label{fig: sig_dep_random_st}
\end{subfigure}
\begin{subfigure}[b]{0.45\textwidth}
\includegraphics[width = \textwidth]{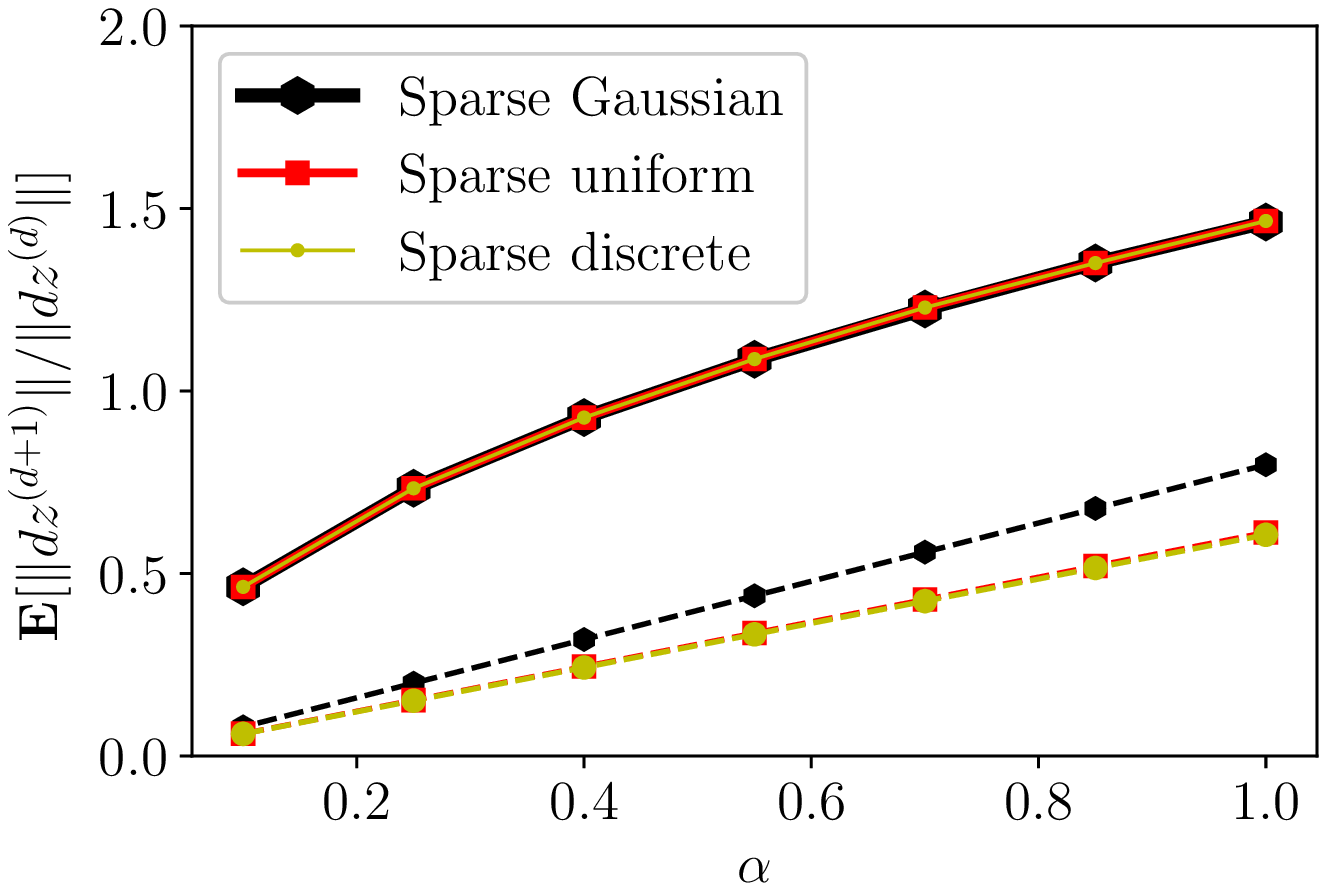}
\caption{}
\label{fig: alpha_dep_random_st}
\end{subfigure}\\
\begin{subfigure}[b]{0.45\textwidth}
\includegraphics[width = \textwidth]{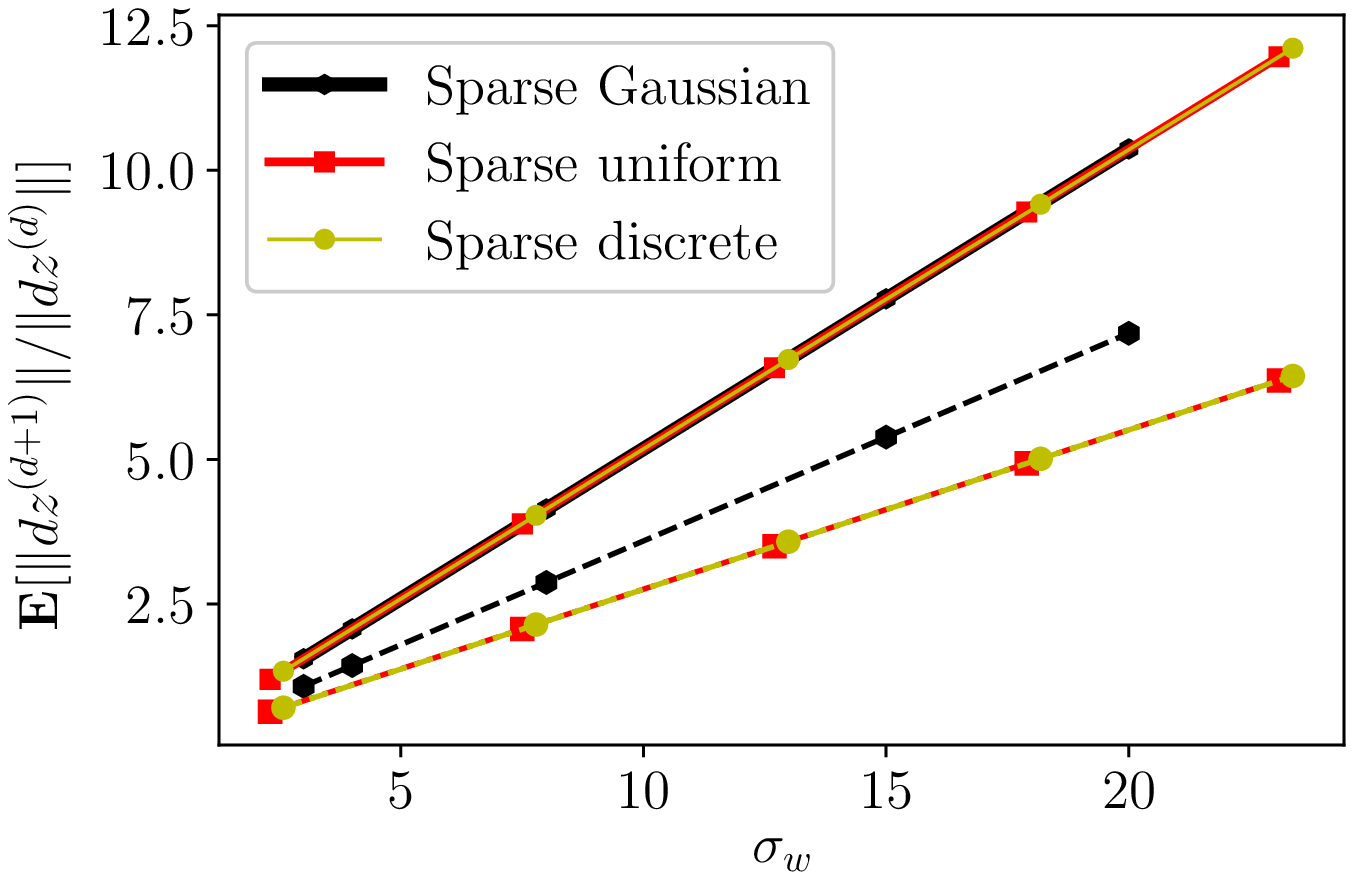}
\caption{}
\label{fig: sig_dep_random_arc}
\end{subfigure}
\begin{subfigure}[b]{0.45\textwidth}
\includegraphics[width = \textwidth]{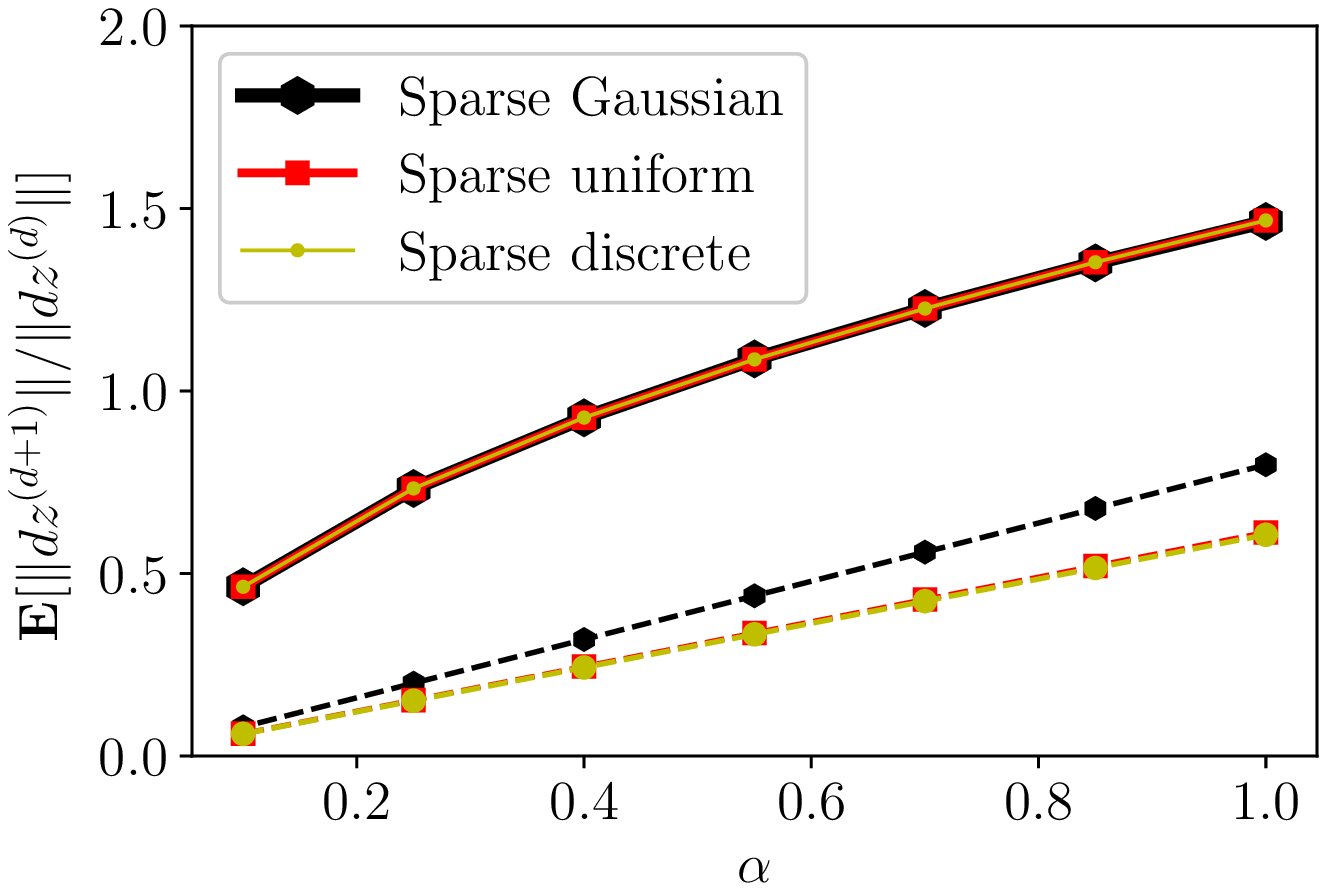}
\caption{}
\label{fig: alpha_dep_random_arc}
\end{subfigure}
\caption{Expected growth factor for trajectories joining randomly chosen (normalised) points in $\R^{500}$. \Figref{fig: sig_dep_random_st} and \Figref{fig: sig_dep_random_arc} show the dependence on the standard deviation of the weights' distribution for a straight and curved trajectory respectively, and \Figref{fig: alpha_dep_random_st} and \Figref{fig: alpha_dep_random_arc} show the dependence on sparsity with a straight and curved trajectory respectively. In this experiment we have chosen as the curved trajectory a straight line which has been modified to be a semi-circular arc in 100 randomly chosen hyperplanes.}
\label{fig: simulations}
\end{figure}

\end{document}